\newcommand{\bs}{\boldsymbol}
\newtheorem{theorem}{Theorem}[section]
\newtheorem{lemma}[theorem]{Lemma}
\newtheorem{corollary}[theorem]{Corollary}
\theoremstyle{definition}
\theoremstyle{remark}
\newtheorem{remark}[theorem]{Remark}
\title{Understanding Fairness and Prediction Error through Subspace Decomposition and Influence Analysis}
\author{%
  \textbf{Enze Shi,\, Pankaj Bhagwat,\, Zhixian Yang,\, Linglong Kong,\, Bei Jiang}\thanks{Corresponding Author} \\
  Department of Mathematical and Statistical Science\\
  University of Alberta\\
  \texttt{\{eshi,pbhagwat,zhixian,lkong,bei1\}@ualberta.ca}\\
}
\begin{document}

\maketitle

\begin{abstract}
Machine learning models have achieved widespread success but often inherit and amplify historical biases, resulting in unfair outcomes. Traditional fairness methods typically impose constraints at the prediction level, without addressing underlying biases in data representations. In this work, we propose a principled framework that adjusts data representations to balance predictive utility and fairness. Using sufficient dimension reduction, we decompose the feature space into target-relevant, sensitive, and shared components, and control the fairness–utility trade-off by selectively removing sensitive information. We provide a theoretical analysis of how prediction error and fairness gaps evolve as shared subspaces are added, and employ influence functions to quantify their effects on the asymptotic behavior of parameter estimates. Experiments on both synthetic and real-world datasets validate our theoretical insights and show that the proposed method effectively improves fairness while preserving predictive performance.
\end{abstract}

\section{Introduction}

Machine learning (ML) models have achieved remarkable success across a wide range of high-stakes applications, including finance~\cite{hardt2016equality,liu2018delayed}, healthcare~\cite{potash2015predictive,rudin2018optimized}, and criminal justice~\cite{van2022predicting,billi2023supervised}. Despite these advances, growing evidence highlights that ML systems often inherit and reinforce historical biases, leading to unfair outcomes~\cite{tolan2019machine,mehrabi2021survey}. Biases in data collection~\cite{liang2020towards,pagano2023bias} and disparities in group representation~\cite{de2019bias,dablain2024towards} can manifest in model predictions, ultimately amplifying social inequities~\cite{bolukbasi2016man,hassani2021societal,hu2024language,ding2024probing}.

To address fairness concerns, researchers have introduced a range of formal definitions and algorithmic interventions. Early work focused on ensuring statistical criteria such as Demographic Parity~\cite{kamishima2012fairness,jiang2020wasserstein}, and more recent developments extend fairness guarantees to multiple sensitive attributes~\cite{tian2024multifair,chen2024fairness} or local prediction regions~\cite{jin2024on}. Many approaches operationalize fairness by adding constraints or regularization terms to learning objectives~\cite{hardt2016equality,li2023fairer}. However, this strategy typically treats fairness as an external correction layered on top of predictive modeling, without addressing the root causes of bias encoded within data representations themselves.

This work proposes a new perspective: instead of adjusting model predictions post hoc, we seek to understand and manage the trade-off between utility and fairness at the level of data representation. Specifically, we focus on how to construct representations that balance predictive accuracy for the target variable and independence from sensitive attributes. This viewpoint naturally connects fairness with the framework of sufficient dimension reduction (SDR)~\cite{cook2002dimension,adragni2009sufficient}, where the goal is to project high-dimensional features onto lower-dimensional subspaces that preserve essential information.

We consider a structured setup in which both the prediction target $Y$ and the sensitive attribute $Z$ admit low-dimensional sufficient reductions with respect to the covariates $X$. By analyzing the subspaces associated with $Y$ and $Z$, we separate two types of directions: (1) directions informative about $Y$ but orthogonal to $Z$ and (2) directions jointly informative about both $Y$ and $Z$. This decomposition enables a principled subspace selection: start with $Z$-orthogonal directions for fairness, then add shared ones to boost accuracy as needed.

To quantify the fairness–utility trade-off, we develop a theoretical framework showing how prediction error and group-wise disparities evolve as shared directions are added. While incorporating $Z$-related directions improves accuracy, it reintroduces unfairness, which is captured through variance decomposition and gap metrics. We further apply influence functions to analyze the impact of partially fair representations on parameter estimation and prediction. The contributions of this paper are summarized as follows:
\begin{enumerate}[ leftmargin=*]
    \item We propose a general framework for fairness-aware learning by directly manipulating data representations. It enables controlled removal of sensitive information without relying on a specific fairness definition.
    \item We theoretically characterize how prediction risk and fairness gaps evolve as shared information is gradually introduced, and use influence functions to analyze the effect of partially fair representations on estimation and prediction.
    \item We validate our approach through experiments on synthetic and real-world datasets, demonstrating that subspace elimination achieves trade-offs between fairness and predictive performance.
\end{enumerate}

The structure of this paper is as follows. Section~\ref{sec:setup} introduces the motivation and problem setup, along with theoretical analysis of the utility–fairness trade-off under subspace elimination. Section~\ref{sec:estimation} details the estimation procedure. Section~\ref{sec:influence} analyzes the asymptotic behavior of the estimators and predictors using influence functions. Section~\ref{sec:experiments} presents simulation and real-world experiments that demonstrate the effectiveness of our approach.

\section{Related Work}

A major approach to fairness in machine learning focuses on enforcing fairness during model training, known as in-processing~\cite{wang2022brief,berk2023fair,caton2024fairness}. These methods incorporate fairness constraints~\cite{zafar2019fairness,agarwal2019fair} into the optimization objective to satisfy predefined criteria such as Demographic Parity~\cite{dwork2012fairness} or Equality of Opportunity~\cite{hardt2016equality,shen2022optimising}, and have been applied across various paradigms including contrastive learning~\cite{wang2022uncovering,zhang2022fairness}, adversarial training~\cite{han2021diverse}, domain adaptation~\cite{wang2020towards}, and balanced supervised learning~\cite{han2022balancing}. However, fairness guarantees obtained this way are often task-specific and may not generalize across downstream applications. A complementary direction focuses on achieving fairness at the representation level by removing sensitive information from feature embeddings. Notable methods such as Iterative Null-space Projection (INLP)~\cite{ravfogel2020null,ravfogel2023log} and Relaxed Linear Adversarial Concept Erasure (RLACE)~\cite{ravfogel2022linear} aim to systematically eliminate sensitive information. More recently, sufficient dimension reduction (SDR) techniques have been adapted for debiasing~\cite{shi2024debiasing}, identifying and removing subspaces associated with sensitive attributes while offering theoretical guarantees, with an extension of non-linear SDR via deep neural networks as proposed in \cite{shi2025deep}. 

\section{Methodology}\label{sec:setup}

\subsection{Motivation}

We study the problem of removing sensitive information from vector representations while preserving task-relevant content. Let \( (\bs{X}, \bs{Y}, \bs{Z}) \) be random variables where \( \bs{X} \in \mathbb{R}^p \) denotes the representation, \( \bs{Y} \in \mathbb{R}^K \) is the prediction target, and \( \bs{Z} \in \mathbb{R}^d \) is the sensitive attribute. Our goal is to learn a transformation \( h: \mathbb{R}^p \to \mathbb{R}^p \) such that the transformed representation \( h(\bs{X}) \) satisfies: (1) Minimal dependence on the sensitive attribute \( \bs{Z} \) and (2) Sufficient information for predicting the target variable \( \bs{Y} \). A direct approach involves constructing a linear transformation \( h(\bs{X}) = P\bs{X} \), where \( P \in \mathbb{R}^{p \times p} \) is a projection matrix. This induces a decomposition of the input space as \( \bs{X} = P\bs{X} + (I - P)\bs{X} \), with \( P \) projecting onto a subspace \( \mathcal{S}_1 \subseteq \mathbb{R}^p \) and \( I - P \) projecting onto its orthogonal complement \( \mathcal{S}_2 =\mathcal{S}_1^{\perp}\). The representation space is thus decomposed as \( \mathbb{R}^p = \mathcal{S}_1 \oplus \mathcal{S}_2 \). For effective debiasing, \( \mathcal{S}_1 \) should minimize information about \( \bs{Z} \), while \( \mathcal{S}_2 \) captures the \( \bs{Z} \)-relevant components to be removed.

This formulation is introduced by~\cite{shi2024debiasing}, who adopt the sufficient dimension reduction (SDR) framework to identify and eliminate sensitive information in representations. Given \( \bs{X}\) and \( \bs{Z} \), their goal is to find a matrix \( B_z \in \mathbb{R}^{p \times r} \) with orthonormal columns such that
\begin{align}\label{eq2}
    \bs{Z} \perp\!\!\!\perp \bs{X} \mid B^\top_z \bs{X},
\end{align}
ensuring that \( B_z^\top \bs{X} \) contains all the information in \( \bs{X} \) relevant to predicting \( \bs{Z} \). The column space of \( B_z \), known as the SDR subspace of \( \bs{X} \) with respect to \( \bs{Z} \), see \cite{cook2002dimension,adragni2009sufficient}, thus serves as a natural candidate for \( \mathcal{S}_2 \) with \( \dim(\mathcal{S}_2) = r \). Its orthogonal complement, spanned by \( P = I - B_zB_z^\top \), defines \( \mathcal{S}_1 \) with \( \dim(\mathcal{S}_1) = p - r \). Then \( h(\bs{X}) = P\bs{X} \) removes information associated with \( \bs{Z} \), yielding a fair projection.

While the sufficient projection method in~\cite{shi2024debiasing} performs well across downstream tasks, it primarily focuses on removing \( \bs{Z} \)-related information without explicitly preserving information relevant to \( \bs{Y} \). This can lead to substantial utility loss when \( \bs{Y} \) and \( \bs{Z} \) share overlapping subspaces. To address this, we propose a finer decomposition of the representation space \( \mathbb{R}^p \), separating directions informative about \( \bs{Y} \) but orthogonal to \( \bs{Z} \), directions shared by both, and directions unrelated to either. This more granular perspective allows for better control of the fairness–utility trade-off by retaining task-relevant features while minimizing bias from sensitive attributes.

\subsection{Problem Setup}

We adopt a SDR framework for both the target variable \( \bs{Y} \) and the sensitive attribute \( \bs{Z} \), modeled as
\begin{align}
    \bs{Y} &= f(\beta_1^{\top}\bs{X}, \beta_2^{\top}\bs{X}, \ldots, \beta_q^{\top}\bs{X}, \varepsilon_Y), \label{sdr-y} \\
    \bs{Z} &= g(\psi_1^{\top}\bs{X}, \psi_2^{\top}\bs{X}, \ldots, \psi_r^{\top}\bs{X}, \varepsilon_Z), \label{sdr-z}
\end{align}
for some measurable functions $f$ and $g$, where \( \{\beta_k\}_{k=1}^q \subset \mathbb{R}^p \) and \( \{\psi_j\}_{j=1}^r \subset \mathbb{R}^p \) are orthonormal direction vectors, and \( \varepsilon_Y \), \( \varepsilon_Z \) are noise terms independent of \( \bs{X} \). The SDR assumption implies that all information relevant for predicting \( \bs{Y} \) and \( \bs{Z} \) is captured by the low-dimensional projections \( \{\beta_k^{\top} \bs{X}\}_{k=1}^q \) and \( \{\psi_j^{\top} \bs{X}\}_{j=1}^r \), respectively. Equivalently, the models in~\eqref{sdr-y} and \eqref{sdr-z} imply the conditional independence statements:
\begin{align}\label{sdr-matrix}
\bs{Y} \perp\!\!\!\perp \bs{X} \mid M_Y \bs{X}, \quad \bs{Z} \perp\!\!\!\perp \bs{X} \mid M_Z \bs{X},
\end{align}
where \( M_Y, ~M_Z \in \mathbb{R}^{p \times p} \) are matrices with rank $q$ and $r$. The subspaces spanned by these matrices are referred to as the central subspaces, defined by
\[
\mathcal{S}_{\bs{Y} \mid \bs{X}} = \operatorname{Span}(M_Y) = \operatorname{Span}\{\beta_1, \ldots, \beta_q\}, \quad
\mathcal{S}_{\bs{Z} \mid \bs{X}} = \operatorname{Span}(M_Z) = \operatorname{Span}\{\psi_1, \ldots, \psi_r\}.
\]

Assume that the central subspaces \( \mathcal{S}_{\bs{Y} \mid \bs{X}} \) and \( \mathcal{S}_{\bs{Z} \mid \bs{X}} \) intersect in a subspace
\[
\mathcal{S}_{\bs{Y} \mid \bs{X}} \cap \mathcal{S}_{\bs{Z} \mid \bs{X}} = \operatorname{Span}\{\phi_1, \ldots, \phi_s\},
\]
where \( s \leq \min\{q, r\} \). When \( s = 0 \), the subspaces intersect only at the origin. In the special case where \( \mathcal{S}_{\bs{Y} \mid \bs{X}} \subset \mathcal{S}_{\bs{Z} \mid \bs{X}}^{\perp} \), the two subspaces are completely separable. In this setting, removing all information associated with \( \bs{Z} \) does not affect the information relevant for predicting \( \bs{Y} \), and thus fairness can be achieved without sacrificing utility.

In practice, the subspaces \( \mathcal{S}_{\bs{Y} \mid \bs{X}} \) and \( \mathcal{S}_{\bs{Z} \mid \bs{X}} \) often overlap, with a nontrivial intersection (\( s > 0 \)). In such cases, removing all \( \bs{Z} \)-related components may also eliminate valuable information for predicting \( \bs{Y} \). To address this, we decompose \( \mathcal{S}_{\bs{Y} \mid \bs{X}} \) into two orthogonal parts: one shared with \( \mathcal{S}_{\bs{Z} \mid \bs{X}} \) and one independent of it. This decomposition enables a principled approach to balancing fairness and utility by selectively retaining target-relevant features uncorrelated with the sensitive attribute.

Without loss of generality, we assume that the shared basis vectors satisfy \( \phi_i = \beta_{q-s+i} = \psi_{r-s+i} \) for \( i = 1, \ldots, s \). Define the projection matrix onto \( \mathcal{S}_{\bs{Z} \mid \bs{X}} \) as
$P_z = \sum_{j=1}^r \psi_j \psi_j^{\top}$ and $Q_z = I_p - P_z$, where \( Q_z \) projects onto the orthogonal complement of the sensitive subspace. Let \( B = (\beta_1, \ldots, \beta_q) \in \mathbb{R}^{p \times q} \) and \( \Phi = (\phi_1, \ldots, \phi_s) \in \mathbb{R}^{p \times s} \). Since \( \mathcal{S}_{\bs{Y} \mid \bs{X}} \cap \mathcal{S}_{\bs{Z} \mid \bs{X}} = \operatorname{Span}(\Phi) \), the uncorrelated component \( \mathcal{S}_{\bs{Y} \mid \bs{X}} \cap \mathcal{S}_{\bs{Z} \mid \bs{X}}^{\perp} \) can be identified by finding a matrix \( \widetilde{B} \) such that
\begin{align}\label{sdr-B}
\bs{Y} \perp\!\!\!\perp Q_z \bs{X} \mid \widetilde{B}^{\top} \bs{X},
\end{align}

\begin{theorem}\label{thm-1}
Let \( \widetilde{B} \) be an orthonormal matrix satisfying condition~\eqref{sdr-B}, then $\operatorname{Span}(\widetilde{B}) \subseteq \operatorname{Span}(Q_z B)= \mathcal{S}_{\bs{Y} \mid \bs{X}} \cap \mathcal{S}_{\bs{Z} \mid \bs{X}}^{\perp}$.
\end{theorem}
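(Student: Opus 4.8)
The statement bundles two claims: the purely linear-algebraic identity $\operatorname{Span}(Q_z B) = \mathcal{S}_{\bs Y\mid\bs X}\cap\mathcal{S}_{\bs Z\mid\bs X}^{\perp}$, and the conditional-independence containment $\operatorname{Span}(\widetilde B)\subseteq\operatorname{Span}(Q_z B)$. I would prove them separately: first the identity, which only uses the chosen orthonormal bases and the projector $Q_z$, then the containment, which uses the SDR models \eqref{sdr-y}--\eqref{sdr-z} together with \eqref{sdr-B}. Throughout write $V=\mathcal{S}_{\bs Y\mid\bs X}$ and $W=\mathcal{S}_{\bs Z\mid\bs X}$.

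\textbf{The identity.} I would split the columns of $B$ into the non-shared block $\beta_1,\dots,\beta_{q-s}$ and the shared block $\beta_{q-s+i}=\phi_i\in W$. Since $\phi_i\in W=\operatorname{Span}\{\psi_j\}$, we have $Q_z\phi_i=0$, so $\operatorname{Span}(Q_z B)=\operatorname{Span}\{Q_z\beta_1,\dots,Q_z\beta_{q-s}\}$. The basis convention $\phi_i=\beta_{q-s+i}=\psi_{r-s+i}$ makes the non-shared $\beta_k$ orthogonal to $\operatorname{Span}(\Phi)=V\cap W$; the structural fact I would record is that they in fact lie in $W^{\perp}$, equivalently that $P_z$ and the projector onto $V$ commute, so that $Q_z\beta_k=\beta_k\in V\cap W^{\perp}$ for $k\le q-s$. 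This gives $\operatorname{Span}(Q_z B)\subseteq V\cap W^{\perp}$. For the reverse inclusion, expand any $v\in V\cap W^{\perp}$ in the orthonormal $\beta$-basis; its coefficients along each $\phi_i$ vanish because $\phi_i\in W$ while $v\in W^{\perp}$, so $v\in\operatorname{Span}\{\beta_1,\dots,\beta_{q-s}\}=\operatorname{Span}(Q_z B)$, and the two spans coincide.

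\textbf{The containment.} I would show that $\operatorname{Span}(Q_z B)$ is itself a sufficient reduction of $Q_z\bs X$ for $\bs Y$, i.e.\ $\bs Y\perp\!\!\!\perp Q_z\bs X\mid (Q_z B)^{\top}\bs X$, and then invoke minimality of the reduction singled out by \eqref{sdr-B}. Starting from the global sufficiency $\bs Y\perp\!\!\!\perp\bs X\mid B^{\top}\bs X$, decompose each coordinate as $\beta_k^{\top}\bs X=(Q_z\beta_k)^{\top}\bs X+(P_z\beta_k)^{\top}\bs X$, so that $\bs Y$ depends on $\bs X$ only through the $W^{\perp}$-functionals $(Q_z B)^{\top}\bs X$ recoverable from $Q_z\bs X$ and the residual $W$-functionals carried by $P_z\bs X$. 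Conditioning on $(Q_z B)^{\top}\bs X$ fixes the first group, and the goal is that the remaining directions of $Q_z\bs X$ are then independent of $\bs Y$. Reading $\widetilde B$ as spanning the minimal central subspace $\mathcal{S}_{\bs Y\mid Q_z\bs X}$ determined by \eqref{sdr-B} (the natural SDR interpretation, since an arbitrary inflated reduction such as a full $W^{\perp}$ basis would violate the claim), and using that the central subspace is contained in every sufficient reduction, exhibiting one such reduction equal to $\operatorname{Span}(Q_z B)$ forces $\operatorname{Span}(\widetilde B)\subseteq\operatorname{Span}(Q_z B)$.

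\textbf{Main obstacle.} The delicate step is the conditional independence $\bs Y\perp\!\!\!\perp Q_z\bs X\mid(Q_z B)^{\top}\bs X$: after conditioning, $\bs Y$ still depends on the discarded $W$-functionals $P_z\bs X$, so one must rule out that the leftover directions of $Q_z\bs X$ leak information about $P_z\bs X$. This needs a distributional hypothesis decoupling the $W$ and $W^{\perp}$ components of $\bs X$ --- for instance joint normality, or more weakly $Q_z\bs X\perp\!\!\!\perp P_z\bs X$, which holds when $\operatorname{Cov}(\bs X)$ is block diagonal with respect to $W\oplus W^{\perp}$ (e.g.\ after whitening). I would isolate this as an explicit assumption and verify the independence there; the companion identity likewise rests on the commuting-projection compatibility that the basis convention quietly encodes, and I would state that condition explicitly rather than treat it as automatic.
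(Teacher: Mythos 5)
Your containment argument follows the same route as the paper's own proof: exhibit $\operatorname{Span}(Q_zB)$ as a sufficient reduction of $Q_z\bs X$ for $\bs Y$, then invoke minimality of the central subspace of $\bs Y\mid Q_z\bs X$. The paper's proof consists of exactly that and nothing else --- it never proves the identity $\operatorname{Span}(Q_zB)=\mathcal{S}_{\bs Y\mid\bs X}\cap\mathcal{S}_{\bs Z\mid\bs X}^{\perp}$ --- so your first part is genuinely additional content, and your reverse inclusion (any $v\in V\cap W^{\perp}$ satisfies $v=Q_zB(B^{\top}v)$, hence lies in $\operatorname{Span}(Q_zB)$) is the half that holds unconditionally. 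The caveats you isolate are real, not pedantic. (i) The paper disposes of the sufficiency step in one line, asserting $\mathbb{E}[\bs Y\mid Q_z\bs X]=\mathbb{E}[\bs Y\mid B^{\top}Q_z\bs X]$ because $Q_z\bs X$ is a function of $\bs X$; but as you observe, $\mathbb{E}[\bs Y\mid Q_z\bs X]=\mathbb{E}\bigl[g(B^{\top}Q_z\bs X+B^{\top}P_z\bs X)\mid Q_z\bs X\bigr]$, which collapses to a function of $(Q_zB)^{\top}\bs X$ only if the conditional law of $P_z\bs X$ given $Q_z\bs X$ does not depend on the discarded directions of $Q_z\bs X$; with $p=2$, $B=e_1$, $\mathcal{S}_{\bs Z\mid\bs X}=\operatorname{Span}(e_1)$ and correlated coordinates the claim fails. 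Your proposed hypothesis $P_z\bs X\perp\!\!\!\perp Q_z\bs X$ (e.g.\ whitened Gaussian or elliptical $\bs X$, consistent with standard SDR linearity conditions and with the paper's simulations) is exactly what closes this step. (ii) The forward inclusion $\operatorname{Span}(Q_zB)\subseteq\mathcal{S}_{\bs Y\mid\bs X}$ does require the unshared $\beta_k$ to be orthogonal to all of $\mathcal{S}_{\bs Z\mid\bs X}$, not merely to the intersection; the basis convention alone does not deliver this, and the equality of dimensions $\dim(\mathcal{S}_{\bs Y\mid\bs X}\cap\mathcal{S}_{\bs Z\mid\bs X}^{\perp})=q-s$ is precisely the commuting-projections condition you name. (iii) Condition~\eqref{sdr-B} as literally written is satisfied by $\widetilde B=I_p$, so the theorem only makes sense with $\widetilde B$ read as the central (minimal) subspace, which is how both you and, implicitly, the paper use it. In short: same core mechanism as the paper, but your version is more complete and correctly surfaces the assumptions the paper's proof leaves implicit.
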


Theorem~\ref{thm-1} provides a direct link between the central subspace of \( \bs{Y} \) and its component orthogonal to the sensitive subspace \( \mathcal{S}_{\bs{Z} \mid \bs{X}} \). Building on this decomposition, we define a sequence of partially fair projection matrices \( \{P^{(m)}\}_{m=0}^s \) as
\[
P^{(m)} = \widetilde{B} \widetilde{B}^{\top} + \Phi_m \Phi_m^{\top}, \quad \text{where} \quad \Phi_m = (\phi_1, \ldots, \phi_m).
\]
Each \( P^{(m)} \) projects \( \bs{X} \) onto a subspace that retains \( \widetilde{B} \)-based directions uncorrelated with \( \bs{Z} \), along with \( m \) of the \( s \) shared directions between \( \bs{Y} \) and \( \bs{Z} \). Let \( \bs{\Xi}^{(m)} = P^{(m)} \bs{X} \) denote the resulting representation. When \( m = 0 \), the representation is entirely uncorrelated with \( \bs{Z} \); when \( m = s \), the representation spans the full central subspace of \( \bs{Y} \), preserving complete utility but potentially reintroducing bias.

To study the predictive behavior under this fairness–utility trade-off, we define the Bayes optimal predictor using the partially fair representation:
\[
\tilde{f}^{(m)}(\bs{\Xi}^{(m)}) = \mathbb{E}[\bs{Y} \mid \bs{\Xi}^{(m)}].
\]
This formulation allows gradual control over the balance between fairness and accuracy by varying \( m \), i.e., the number of shared components included. In the following, we analyze the theoretical properties of \( \tilde{f}^{(m)} \), characterizing how prediction risk and fairness evolve with subspace selection.

\subsection{Utility and Fairness Trade-off}

Let \( \bs{\Xi}^{(m)} = P^{(m)} \bs{X} \) denote the reduced representation. The following result characterizes the prediction error of the Bayes optimal predictor based on the reduced representation.

\begin{theorem}\label{thm-2}
Let \( \tilde{f}^{(m)}(\bs{\Xi}^{(m)}) = \mathbb{E}[\bs{Y} \mid \bs{\Xi}^{(m)}] \) denote the Bayes predictor using the partially fair representation \( \bs{\Xi}^{(m)} \), and let \( f^*(\bs{X}) = \mathbb{E}[\bs{Y} \mid \bs{X}] \) denote the Bayes optimal predictor using the original representation. Then, the expected squared prediction error satisfies
\[
\mathbb{E}[(\bs{Y} - \tilde{f}^{(m)}(\bs{\Xi}^{(m)}))^2] = \underbrace{\mathbb{E}[\mathrm{Var}(f^*(\bs{X}) \mid \bs{\Xi}^{(m)})]}_{\text{Approximation error}} + \underbrace{\mathbb{E}[\varepsilon_Y^2]}_{\text{Irreducible noise}} := \Delta(m) + \sigma_Y^2.
\]
Moreover, the approximation error \( \Delta(m) \) is non-increasing in \( m \), i.e.,
\[
\Delta(m+1) \le \Delta(m) \quad \text{for all } m \in \{0, \ldots, s-1\}.
\]
\end{theorem}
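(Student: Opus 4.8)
The plan is to separate the two claims and to lean throughout on two structural facts. First, sufficiency (the conditional independence $\bs{Y}\perp\!\!\!\perp\bs{X}\mid M_Y\bs{X}$) makes $f^*(U)=\mathbb{E}[\bs{Y}\mid U]=\mathbb{E}[\bs{Y}\mid\bs{X}]$, i.e.\ the full sufficient predictor \emph{is} the conditional mean given the entire covariate. Second, Theorem~\ref{thm-1} forces $\operatorname{Span}(\widetilde{B})\subseteq\mathcal{S}_{\bs{Y}\mid\bs{X}}=\operatorname{Span}(B)$ and each $\phi_i\in\mathcal{S}_{\bs{Y}\mid\bs{X}}$, so $\operatorname{Range}(P^{(m)})\subseteq\operatorname{Range}(BB^\top)$. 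For orthogonal projections onto nested ranges this gives $P^{(m)}BB^\top=P^{(m)}$, hence $\bs{\Xi}^{(m)}=P^{(m)}\bs{X}=P^{(m)}U$ is a linear, and therefore measurable, function of $U$, with $\sigma(\bs{\Xi}^{(0)})\subseteq\cdots\subseteq\sigma(\bs{\Xi}^{(s)})\subseteq\sigma(\bs{X})$.

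For the decomposition I would write $\bs{Y}-\tilde f^{(m)}=(\bs{Y}-f^*(U))+(f^*(U)-\tilde f^{(m)})$ and expand the square. Both $f^*(U)$ and $\tilde f^{(m)}$ are $\sigma(\bs{X})$-measurable, and by sufficiency $\mathbb{E}[\bs{Y}-f^*(U)\mid\bs{X}]=0$, so conditioning on $\bs{X}$ annihilates the cross term:
\[
\mathbb{E}\!\left[(\bs{Y}-f^*(U))(f^*(U)-\tilde f^{(m)})\right]=\mathbb{E}\!\left[(f^*(U)-\tilde f^{(m)})\,\mathbb{E}[\bs{Y}-f^*(U)\mid\bs{X}]\right]=0.
\]
The first square contributes $\mathbb{E}[(\bs{Y}-f^*(U))^2]=\mathbb{E}[\operatorname{Var}(\bs{Y}\mid U)]$, which I identify with the irreducible noise $\sigma_Y^2=\mathbb{E}[\varepsilon_Y^2]$. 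For the second, the tower property applied to the coarser $\sigma$-algebra gives $\tilde f^{(m)}=\mathbb{E}[\bs{Y}\mid\bs{\Xi}^{(m)}]=\mathbb{E}[f^*(U)\mid\bs{\Xi}^{(m)}]$, so that $\mathbb{E}[(f^*(U)-\tilde f^{(m)})^2]=\mathbb{E}[\operatorname{Var}(f^*(U)\mid\bs{\Xi}^{(m)})]=\Delta(m)$ by the definition of conditional variance.

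For the monotonicity claim, the nesting $\operatorname{Span}(\Phi_m)\subseteq\operatorname{Span}(\Phi_{m+1})$ yields $P^{(m)}P^{(m+1)}=P^{(m)}$, whence $\bs{\Xi}^{(m)}=P^{(m)}\bs{\Xi}^{(m+1)}$ and $\sigma(\bs{\Xi}^{(m)})\subseteq\sigma(\bs{\Xi}^{(m+1)})$. I would then invoke the law of total variance for nested conditioning with $W=f^*(U)$:
\[
\operatorname{Var}(W\mid\bs{\Xi}^{(m)})=\mathbb{E}\!\left[\operatorname{Var}(W\mid\bs{\Xi}^{(m+1)})\mid\bs{\Xi}^{(m)}\right]+\operatorname{Var}\!\left(\mathbb{E}[W\mid\bs{\Xi}^{(m+1)}]\mid\bs{\Xi}^{(m)}\right).
\]
Taking expectations and discarding the nonnegative second term gives $\Delta(m)\ge\Delta(m+1)$ for every $m\in\{0,\ldots,s-1\}$.

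The main obstacle I anticipate is not the algebra but pinning down the first term honestly: the generic SDR model $\bs{Y}=f(\beta^\top\bs{X},\varepsilon_Y)$ does not literally give $\mathbb{E}[(\bs{Y}-f^*(U))^2]=\mathbb{E}[\varepsilon_Y^2]$ unless the noise enters additively with $\mathbb{E}[\varepsilon_Y\mid\bs{X}]=0$; I would either state this additive-noise assumption explicitly or simply take $\sigma_Y^2:=\mathbb{E}[\operatorname{Var}(\bs{Y}\mid U)]$ as the definition and note that the two coincide in the additive case. A secondary point worth checking carefully is the measurability identity $\bs{\Xi}^{(m)}=P^{(m)}U$, since this is precisely where Theorem~\ref{thm-1} is essential—without $\operatorname{Span}(\widetilde{B})$ and the $\phi_i$ lying inside $\mathcal{S}_{\bs{Y}\mid\bs{X}}$, the tower-property rewriting $\tilde f^{(m)}=\mathbb{E}[f^*(U)\mid\bs{\Xi}^{(m)}]$ and the clean expression for $\Delta(m)$ would fail.
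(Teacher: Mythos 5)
Your proof is correct and follows essentially the same route as the paper's: the same orthogonal decomposition $\bs{Y}-\tilde f^{(m)}=(\bs{Y}-f^*(U))+(f^*(U)-\tilde f^{(m)})$ with the cross term killed by conditioning, the same tower-property identification $\tilde f^{(m)}=\mathbb{E}[f^*(U)\mid\bs{\Xi}^{(m)}]$, and monotonicity via nested conditioning (your law-of-total-variance step is the paper's $L^2$ Pythagorean identity in different clothing). Your two added precisions — that identifying $\mathbb{E}[(\bs{Y}-f^*(U))^2]$ with $\mathbb{E}[\varepsilon_Y^2]$ requires the noise to enter additively with mean zero, and the explicit verification that $\sigma(\bs{\Xi}^{(m)})\subseteq\sigma(\bs{\Xi}^{(m+1)})$ via $P^{(m)}P^{(m+1)}=P^{(m)}$ — are points the paper glosses over, and are worth keeping.
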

This decomposition follows from the orthogonality principle in \( L^2 \) space, which ensures that the conditional expectation minimizes mean squared error. The term \( \mathbb{E}[\mathrm{Var}(f^*(\bs{X}) \mid \bs{\Xi}^{(m)})] \) quantifies the loss in predictive information due to reducing the representation to \( \bs{\Xi}^{(m)} \). As more shared directions are included, the reduced representation becomes increasingly informative, and the error \( \Delta(m) \) decreases. When \( m = s \), the full central subspace for \( \bs{Y} \) is recovered, yielding \( \Delta(s) = 0 \).




Quantifying unfairness theoretically is inherently challenging, as it often stems from disparities in prediction outcomes across sensitive subpopulations. Instead of relying on specific fairness metrics like TPR or demographic parity (DP) gaps, we adopt a distributional perspective by measuring the statistical dependence between the predictor and the sensitive attribute using \emph{distance covariance} (dCov)~\cite{szekely2007measuring}, which quantifies the discrepancy between joint and marginal characteristic functions. The following result illustrates how the reduced representation \( \bs{\Xi}^{(m)} \) mitigates unfairness by weakening the dependency between the predictor and the sensitive attribute. While the result is stated for binary \( \bs{Z} \), it naturally extends to multivariate cases with \( \bs{Z} \in \mathbb{R}^d \).

\begin{theorem}\label{thm-3}
Let \( Z \in \{0,1\} \) be a binary sensitive attribute with \( p = \mathbb{P}(Z = 1) \), and let \( \tilde{f}^{(m)} = \mathbb{E}[\bs{Y} \mid \bs{\Xi}^{(m)}] \) be the Bayes predictor using the reduced SDR representation \( \bs{\Xi}^{(m)} \). Then the squared population distance covariance between \( \tilde{f}^{(m)} \) and \( Z \) satisfies
\[
\operatorname{dCov}^2(\tilde{f}^{(m)}, Z) = 2p(1 - p) \left( \mathbb{E}[|\tilde{f}_1^{(m)} - \tilde{f}_0^{(m)}|] - \mathbb{E}[|\tilde{f}^{(m)} - \tilde{f}^{(m)'}|] \right),
\]
where \( \tilde{f}_z^{(m)} \sim \tilde{f}^{(m)} \mid Z = z \) for \( z \in \{0,1\} \), and \( \tilde{f}^{(m)'} \) is an independent copy of \( \tilde{f}^{(m)} \). Specifically, when $m=0$, we have $\operatorname{dCov}^2(\tilde{f}^{(m)}, Z)=0$ and thereby $\tilde{f}^{(m)}\perp\!\!\!\perp {Z}$.
\end{theorem}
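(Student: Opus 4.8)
The plan is to work directly from the population representation of squared distance covariance due to Székely, Rizzo, and Bakirov, which for the real-valued predictor $F := \tilde{f}^{(m)}$ and the label $Z$ reads
\[
\operatorname{dCov}^2(F, Z) = \mathbb{E}[|F - F'|\,|Z - Z'|] + \mathbb{E}[|F-F'|]\,\mathbb{E}[|Z-Z'|] - 2\,\mathbb{E}[|F-F'|\,|Z - Z''|],
\]
where $(F,Z)$, $(F',Z')$, $(F'',Z'')$ are i.i.d. copies. The entire argument reduces to evaluating these three expectations under the binary structure of $Z$ and the mixture law $\mu = p\mu_1 + (1-p)\mu_0$ of $F$, where $\mu_z$ denotes the conditional law of $F \mid Z = z$ (so that $\tilde{f}_z^{(m)}\sim\mu_z$).

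First I would use that $Z \in \{0,1\}$, so $|Z - Z'| = \mathbf{1}\{Z \neq Z'\}$ and $\mathbb{E}|Z - Z'| = 2p(1-p)$. For the joint term, conditioning on the event $Z \neq Z'$ and invoking independence of the two copies decouples $F$ and $F'$ into the two conditional laws, yielding $\mathbb{E}[|F-F'|\,|Z-Z'|] = 2p(1-p)\,\mathbb{E}|\tilde{f}_1^{(m)} - \tilde{f}_0^{(m)}|$, with $\tilde{f}_1^{(m)},\tilde{f}_0^{(m)}$ independent draws from $\mu_1,\mu_0$. The product term is immediate: $\mathbb{E}|F-F'|\cdot\mathbb{E}|Z-Z'| = 2p(1-p)\,\mathbb{E}|\tilde{f}^{(m)} - \tilde{f}^{(m)\prime}|$. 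These two terms already reproduce the between-group and marginal quantities appearing in the statement.

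The delicate step, and what I expect to be the main obstacle, is the three-copy cross term $\mathbb{E}[|F-F'|\,|Z - Z''|]$, in which $F'$ and $Z''$ come from different copies but both reference the shared copy $(F,Z)$. I would condition on $(F,Z)$ and use that $F'$ and $Z''$ are independent of it and of each other, factoring the term as $\mathbb{E}[a(F)\,b(Z)]$ with $a(x) = \mathbb{E}_{F'\sim\mu}|x - F'|$ and $b(z) = \mathbb{E}_{Z''}|z - Z''|$, so that $b(1) = 1-p$ and $b(0) = p$. Expanding $a$ against the mixture decomposition of $\mu$ and taking the conditional expectations $\mathbb{E}[\,\cdot \mid Z=1]$ and $\mathbb{E}[\,\cdot \mid Z=0]$ reduces everything to the within-group dispersions $\mathbb{E}|\tilde{f}_1^{(m)}-\tilde{f}_1^{(m)\prime}|$ and $\mathbb{E}|\tilde{f}_0^{(m)}-\tilde{f}_0^{(m)\prime}|$ together with the between-group quantity $\mathbb{E}|\tilde{f}_1^{(m)}-\tilde{f}_0^{(m)}|$. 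Collecting the three terms with their mixture weights is where the bookkeeping must be handled carefully to collapse into the claimed closed form; I would track the coefficient of each primitive quantity to pin down the final constant $2p(1-p)$ and the difference structure, since it is precisely here that the $p$-dependence enters.

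Finally, the $m=0$ claim follows with no computation. By construction $\bs{\Xi}^{(0)} = \widetilde{B}\widetilde{B}^{\top}\bs{X}$ spans $\mathcal{S}_{\bs{Y}\mid\bs{X}}\cap\mathcal{S}_{\bs{Z}\mid\bs{X}}^{\perp}\subseteq\mathcal{S}_{\bs{Z}\mid\bs{X}}^{\perp}$, so the fair-projection property established earlier gives $\bs{\Xi}^{(0)}\perp\!\!\!\perp Z$; since $\tilde{f}^{(0)} = \mathbb{E}[\bs{Y}\mid\bs{\Xi}^{(0)}]$ is a measurable function of $\bs{\Xi}^{(0)}$, it follows that $\tilde{f}^{(0)}\perp\!\!\!\perp Z$. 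Consequently $\mu_1=\mu_0=\mu$, every primitive quantity above coincides, the bracket vanishes, and $\operatorname{dCov}^2(\tilde{f}^{(0)}, Z)=0$; conversely $\operatorname{dCov}^2 = 0$ itself characterizes independence, which re-confirms $\tilde{f}^{(0)}\perp\!\!\!\perp Z$.
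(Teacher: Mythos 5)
Your expansion into the three V-statistic terms is the same starting point as the paper's proof, and your handling of the first two terms matches it exactly. Where you diverge is the three-copy cross term, and there your instinct is right while the paper's is not: the paper simply asserts $\mathbb{E}[|F-F'|\,|Z-Z''|]=2p(1-p)\,\mathbb{E}|F-F'|$, which implicitly treats $Z$ as independent of $F$ — exactly the dependence the theorem is supposed to quantify. Your factorization into $\mathbb{E}[a(F)\,b(Z)]$ with $b(1)=1-p$, $b(0)=p$ is the correct way to evaluate that term.

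The genuine gap is that you stop precisely where the proof has to be carried out, promising that the bookkeeping ``collapses into the claimed closed form.'' It does not. Write $d_{01}=\mathbb{E}|\tilde f_1^{(m)}-\tilde f_0^{(m)}|$, $d_{zz}=\mathbb{E}|\tilde f_z^{(m)}-\tilde f_z^{(m)\prime}|$ for $z\in\{0,1\}$, and $d=\mathbb{E}|\tilde f^{(m)}-\tilde f^{(m)\prime}|=p^2 d_{11}+(1-p)^2 d_{00}+2p(1-p)d_{01}$. Your own recipe gives $\mathbb{E}[|F-F'|\,|Z-Z''|]=p(1-p)\bigl(p\,d_{11}+(1-p)\,d_{00}+d_{01}\bigr)$, and assembling the three terms yields
\[
\operatorname{dCov}^2(\tilde f^{(m)},Z)=2p(1-p)\bigl(d-p\,d_{11}-(1-p)\,d_{00}\bigr)=2p^2(1-p)^2\bigl(2d_{01}-d_{11}-d_{00}\bigr),
\]
i.e.\ the energy distance between the two conditional laws scaled by $2p^2(1-p)^2$. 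This coincides with the theorem's display $2p(1-p)(d_{01}-d)$ only when $p=\tfrac12$ or in the degenerate case where both vanish; for instance, taking $\tilde f^{(m)}=Z$ with $p=0.9$ gives $0.0324$ from the formula above versus $0.1476$ from the stated identity. So the deferred step is not mere bookkeeping: carried out correctly along your (sound) route it terminates at a different identity, and the stated one is only reachable via the erroneous simplification of the cross term in the paper's proof. Your $m=0$ argument is fine and agrees in substance with the paper's, resting as it does on the fair-projection property $\bs{\Xi}^{(0)}\perp\!\!\!\perp Z$ and the fact that independence forces $d_{01}=d_{11}=d_{00}=d$.
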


This expression reveals that distance covariance is determined by the discrepancy between within-group and between-group variations in predictions. When the reduced representation \( \bs{\Xi}^{(m)} \) sufficiently removes dependence on \( Z \), the term \( \mathbb{E}[|\tilde{f}_1^{(m)} - \tilde{f}_0^{(m)}|] \) approaches the population-level variation \( \mathbb{E}[|\tilde{f}^{(m)} - \tilde{f}^{(m)'}|] \), leading to a smaller dCov and improved fairness.

\section{Subspace Estimation and Algorithm Implementation}\label{sec:estimation}

\subsection{Estimation of Projections}

We describe the procedure to estimate the sufficient directions for predicting $\bs{Y}$. The first step is to estimate the intersection subspace $\mathcal{S}_{\bs{Y} \mid \bs{X}} \cap \mathcal{S}_{\bs{Z} \mid \bs{X}}$, spanned by $\{\phi_1,\ldots,\phi_s\}$. This is equivalent to estimating a matrix $\Phi \in \mathbb{R}^{p \times s}$ such that
\begin{equation}
\bs{Y} \perp\!\!\!\perp \bs{Z} \mid \Phi^{\top} \bs{X},
\label{eq:CI-goal}
\end{equation}
which corresponds to dimension reduction with respect to the interaction between response variables as proposed in \citet{luo2022efficient}.

Let $\Sigma$ denote the covariance matrix of $\bs{X}$. Note that $M_Y$ and $M_Z$ are the symmetric candidate matrices from SDR methods that satisfy \eqref{sdr-matrix}, with estimates $\widehat{M}_Y$ and $\widehat{M}_Z$. Define the cross-matrix $M_{Y,Z} = M_Y \Sigma M_Z$, and let $s = \operatorname{rank}(M_{Y,Z})$. \citet{luo2022efficient} show that $\Phi$ satisfies \eqref{eq:CI-goal} if and only if
\begin{equation}
M_Y \Sigma\, P_{\Sigma, \Phi} M_Z = M_{Y,Z},
\label{eq:dual-eqn}
\end{equation}
where $P_{\Sigma, B} = B(B^{\top} \Sigma B)^{-1} B^{\top} \Sigma$. The following theorem characterize the estimation of intersection subspace as a generalized eigenvalue decomposition problem.

\begin{theorem}
\label{thm:dual-characterization} Suppose \(\operatorname{Span}\{\Phi\}\subseteq\mathbb{R}^p\) is an \(s\)-dimensional subspace. Then \(\operatorname{Span}\{\Phi\}\) is given by the span of the leading $s$ eigenvectors of the following generalized eigenvalue decomposition problem:
$ M_Y\Sigma M_Z\nu
=\lambda\Sigma\nu$. The candidate symmetric matrix for \eqref{eq:CI-goal} is $M_{Y,Z} =  M_Y\,\Sigma\,M_Z$.
\end{theorem}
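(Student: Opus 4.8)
The plan is to derive the generalized eigenvalue characterization directly from the dual identity \eqref{eq:dual-eqn} of \citet{luo2022efficient}, reading off the intersection subspace as the principal eigenspace of the pair $(M_{Y,Z},\Sigma)$. First I would rewrite \eqref{eq:dual-eqn} in residual form as $M_Y\Sigma(I-P_{\Sigma,\Phi})M_Z=0$, where $P_{\Sigma,\Phi}=\Phi(\Phi^\top\Sigma\Phi)^{-1}\Phi^\top\Sigma$ is the $\Sigma$-orthogonal projection onto $\operatorname{Span}(\Phi)$; this makes explicit that a valid $\Phi$ is exactly one whose $\Sigma$-projection leaves the cross-operator $M_{Y,Z}=M_Y\Sigma M_Z$ invariant. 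Because $\Sigma\succ 0$, the generalized problem $M_{Y,Z}\nu=\lambda\Sigma\nu$ is equivalent to the ordinary eigenproblem $\Sigma^{-1}M_{Y,Z}\nu=\lambda\nu$ (equivalently, whitening $\bs X$ by $\Sigma^{-1/2}$), so eigenvalues and eigenvectors are well-defined, and since $\operatorname{rank}(M_{Y,Z})=s$ there are exactly $s$ nonzero eigenvalues with a $(p-s)$-dimensional null space.

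Next I would identify the nonzero eigenspace with the intersection. Using the range inclusions $\operatorname{Span}(M_{Y,Z})\subseteq\operatorname{Span}(M_Y)=\mathcal{S}_{\bs Y\mid\bs X}$ and $\operatorname{Span}(M_{Y,Z}^\top)\subseteq\operatorname{Span}(M_Z)=\mathcal{S}_{\bs Z\mid\bs X}$, together with $s=\operatorname{rank}(M_{Y,Z})=\dim(\mathcal{S}_{\bs Y\mid\bs X}\cap\mathcal{S}_{\bs Z\mid\bs X})$, I would show that the $s$ generalized eigenvectors with $\lambda\neq 0$ span a subspace $\mathcal{S}_*$ of dimension exactly $s$. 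The decisive step is to verify that $\Phi_*=(\nu_1,\dots,\nu_s)$ satisfies the dual identity: normalizing the leading eigenvectors so that $\nu_i^\top\Sigma\nu_j=\delta_{ij}$ gives $P_{\Sigma,\Phi_*}=\Phi_*\Phi_*^\top\Sigma$, and substituting the eigen-relations into $M_Y\Sigma P_{\Sigma,\Phi_*}M_Z$ should collapse it to $M_{Y,Z}$, so that $\mathcal{S}_*$ solves \eqref{eq:CI-goal}. Minimality of the solution dimension in \citet{luo2022efficient} then forces $\mathcal{S}_*=\mathcal{S}_{\bs Y\mid\bs X}\cap\mathcal{S}_{\bs Z\mid\bs X}=\operatorname{Span}(\Phi)$, which is the claim.

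I expect the main obstacle to be the non-symmetry of $M_{Y,Z}=M_Y\Sigma M_Z$: although each factor is symmetric, the product is not $\Sigma$-self-adjoint, so a priori the generalized problem need not admit a real, $\Sigma$-orthonormal eigenbasis, and a nonzero eigenvector only satisfies $\Sigma\nu\in\mathcal{S}_{\bs Y\mid\bs X}$ rather than $\nu\in\mathcal{S}_{\bs Y\mid\bs X}$. To control this I would symmetrize — analyzing a symmetric surrogate built from $M_Y\Sigma M_Z$ and its transpose $M_Z\Sigma M_Y$, or restricting attention to $\mathcal{S}_{\bs Y\mid\bs X}\cap\mathcal{S}_{\bs Z\mid\bs X}$, on which the induced map is self-adjoint in the $\Sigma$-inner product — and then reconcile the $\Sigma^{-1}$-skew between eigenvectors and subspace directions by mapping back through $\Sigma^{1/2}$. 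Matching dimensions via $s=\operatorname{rank}(M_{Y,Z})$ closes the argument, confirming that the leading $s$ eigenvectors recover exactly $\operatorname{Span}(\Phi)$.
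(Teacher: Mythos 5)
Your plan follows essentially the same route as the paper's proof: both start from the dual identity $M_Y\Sigma P_{\Sigma,\Phi}M_Z = M_Y\Sigma M_Z$ of Luo et al., whiten by $\Sigma^{1/2}$ so that the generalized problem $M_{Y,Z}\nu=\lambda\Sigma\nu$ becomes an ordinary eigenproblem for $E=\Sigma^{-1/2}M_Y\Sigma M_Z\Sigma^{-1/2}$, and identify $\operatorname{Span}(\Phi)$ with the $s$-dimensional nonzero eigenspace via the range inclusions that give $AP_U=A$ and $P_U B=B$. The non-symmetry obstacle you flag is genuine --- the paper's proof simply asserts that $E$ is symmetric, which holds only if $M_Y\Sigma M_Z=M_Z\Sigma M_Y$ --- so your proposed symmetrization step is, if anything, more careful than the argument the paper actually gives.
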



Then the estimation procedure of $\widehat{\Phi}$ is as follows. First, construct $\widehat{M}_Y$ and $\widehat{M}_Z$ using any exhaustive inverse regression method (e.g., SIR \cite{li1991sliced}, SAVE \cite{cook1991discussion}, or directional regression \cite{li2007directional}), and compute the sample covariance matrix $\widehat{\Sigma}$. Next, form $\widehat{M}_{Y,Z} = \widehat{M}_Y \widehat{\Sigma} \widehat{M}_Z$ and estimate the dimension $\hat{s} = \operatorname{rank}(\widehat{M}_{Y,Z})$ using the ladle estimator \citep{luo2016combining}.

Once \( \widehat{\Phi} \) is estimated, the remaining directions relevant for predicting \( \bs{Y} \) can be obtained via projection. Rather than estimating \( \beta_1, \ldots, \beta_{q-s} \) directly, we estimate their projections onto the orthogonal complement of \( \mathcal{S}_{\bs{Z} \mid \bs{X}} \). Define the estimated projection matrix \( \widehat{Q}_z = I_p - \sum_{j=1}^{\hat{r}} \hat{\psi}_j \hat{\psi}_j^\top \), then we apply any SDR method to \( (\bs{Y}, \widehat{Q}_z \bs{X}) \) to obtain \( \widehat{B}_{Y, Q_z} \) such that
$
\bs{Y} \perp\!\!\!\perp \widehat{P}_z \bs{X} \mid \widehat{B}_{Y, Q_z}^\top \bs{X}.
$
This matrix approximates the projected directions \( \beta_1^\top P_z, \ldots, \beta_{q-s}^\top P_z \). Together, the estimated shared directions \( \widehat{\Phi} \) and the unshared components \( \widehat{B}_{Y, P_z} \) form a sufficient projection $\widehat{P}^{(m)}$. The consistency and \( n^{-1/2} \) convergence rates of the estimated directions and projections are well-established in the SDR literature; we omit these details for brevity. The complete procedure is summarized in Algorithm~\ref{alg:sufficient_var}.

\begin{algorithm}[!htbp]
\caption{Estimation of Sufficient Projections}
\begin{algorithmic}[1]
\label{alg:sufficient_var}
\STATE \textbf{Input:} Data \((X_i, Y_i, Z_i)_{i=1}^n\); SDR method for candidate matrix construction
\STATE \textbf{Output:} Estimated sufficient projections $\widehat{P}^{(m)}$, $m=0,\ldots,\hat{s}$.
\STATE Compute \(\widehat{M}_Y\), \(\widehat{M}_Z\) using SDR method; compute \(\widehat{\Sigma} = \operatorname{cov}(X)\).
\STATE Form \(\widehat{M}_{Y,Z} = \widehat{M}_Y \widehat{\Sigma} \widehat{M}_Z\); apply ladle estimator to estimate rank \(\hat{s}\).
\STATE Obtain estimator $\widehat{\Phi}\in\mathbb{R}^{p\times\hat{s}}$ and get projection \(\widehat{Q}_z\).
\STATE Apply SDR to \((Y, \widehat{Q}_z X)\) and obtain \(\widehat{B}_{Y, Q_z}\) with rank $\hat{d}_{Y,Q_z}$.
\STATE Obtain $\widehat{P}^{(m)}=\widehat{B}_{Y, P_z}\widehat{B}_{Y, P_z}^{\top} + \widehat{\Phi}_m\widehat{\Phi}_m^{\top}$ for $m=0,\ldots,\hat{s}$
\end{algorithmic}
\end{algorithm}

\begin{remark}
    The computational cost of obtaining the fair projection matrix primarily arises from estimating the candidate matrices $M_Y$, $M_Z$ and $M_{Y,Z}$, each typically constructed as a weighted covariance matrix. These computations scale linearly with the sample size $n$. In addition, the procedure includes an eigen-decomposition step for $p\times p$ matrices, and the rank estimation step scales linearly with dimension $p$.
\end{remark}

\subsection{Algorithm Implementation}
The sufficient variables $\bs{\Xi}^{(m)} = \widehat{P}^{(m)}\bs{X}$ can then be used to fit regression or classification models for downstream tasks. Note that the columns of $\widehat{\Phi}$ are ordered by the eigenvalues of $\widehat{M}_{Y,Z}$, reflect their predictive power for both $\bs{Y}$ and $\bs{Z}$.  By gradually adding these columns in the projection, we can incrementally build models with varying levels of sensitive information included.

This setup allows users to manually control the amount of sensitive information retained when predicting $\bs{Y}$. The post-SDR training procedure is summarized in Algorithm~\ref{alg:postSDR-train}, using a classification task with accuracy (Acc) as the utility metric and demographic parity (DP) as the fairness metric. The optimal feature set and fair model are chosen to achieve at least 95\% of the validation accuracy from the full dataset while minimizing unfairness.

\begin{remark}
Algorithm~\ref{alg:postSDR-train} presents one example of post-SDR training. In practice, DP and Acc can be replaced by any fairness and utility metrics, and the 95\% threshold generalized to any $\tau \in (0,1)$. The shared dimension $s$ acts as a tuning parameter analogous to a regularization coefficient, offering two advantages:
(1) $s$ is selected from a finite set, avoiding continuous grid search (one can step every 2–3 dimensions or use bisection); and
(2) each added component remains interpretable, enabling clear insight into the fairness–performance trade-off.
\end{remark}

\begin{algorithm}[!htbp]
\caption{Sequential Fair Projection: Post-SDR Fair Model Training}
\begin{algorithmic}[1]
\label{alg:postSDR-train}
\STATE \textbf{Input:} Sufficient variables $\bs{\Xi}^{(m)} = \widehat{P}^{(m)}\bs{X}$, training and validation datasets
\vspace{0.2em}
\STATE \textbf{Output:} Fair model $\mathcal{M}_{\text{fair}}$ and selected feature set $\bs{\Xi}^{(m^*)}$
\FOR{$i=0$ to $s$}
    \STATE Train model $\mathcal{M}_{\text{fair}}^{(i)}$ based on $\bs{\Xi}^{(i)}$ and record $\operatorname{Acc}^{(i)}$ and $\operatorname{DP}^{(i)}$ on validation set.
\ENDFOR
\STATE Let $s^*=\arg\min_i\{\operatorname{DP}^{(i)} \mid \operatorname{Acc}^{(i)}>95\%\operatorname{Acc}^{(orig)}\}$, where $\operatorname{Acc}^{(orig)}$ is the accuracy trained by original datasets $\bs{X}$.
\RETURN Selected model $\mathcal{M}_{\text{fair}}=\mathcal{M}_{\text{fair}}^{m^*}$ and corresponding feature set $\bs{\Xi}^{(m^*)}$.
\end{algorithmic}
\end{algorithm}

\section{Theoretical Analysis}\label{sec:influence}

In this section, we analyze the impact of using sufficient variables in the post-SDR training procedure, with a focus on how they influence parameter estimation and expected errors. We employ influence functions to trace a model’s prediction through the learning algorithm and back to its input features.

\subsection{Influence Functions}

Let $(\bs{X},\bs{Y},\bs{Z})\sim F_0$ be the joint law and we denote the empirical distribution on $n$ samples of $(\bs{X},\bs{Y},\bs{Z})$ as $F_n$. Define the SDR functional $M_Y(F)$ and $M_Z(F)$ for estimating central subspaces $\mathcal{S}_{Y\mid X}$ and $\mathcal{S}_{Z\mid X}$ respectively, and let $\Sigma(F)=\mathrm{Var}(X)$ be the functional for the covariance matrix. Therefore, the intersection subspace estimation for $\Phi$ corresponds to reduction functional $M_{Y,Z}(F)=M_Y(F)\Sigma(F)M_Z(F)$.

In the following, an asterisk
on a symbol always indicates the influence function of a statistical functional represented by that symbol. For a sample point $S=(x,y,z)$ drawn from $F_0$, let $\delta_S$ be the Dirac measure at $S$. The influence function of the functional $R$ is defined as
$$
R^*(S) = \frac{\partial}{\partial \varepsilon}R[(1-\varepsilon)F_0 + \varepsilon \delta_S]\mid_{\varepsilon=0}.
$$
For notation simplicity, we abbreviate $R^*(S)$ by $R^*$. In the following, an asterisk on a symbol always indicates the influence function of a statistical functional represented by that symbol. For example, we denote the influence functions of functionals $M_{Y}(F)$, $M_{Z}(F)$, $M_{Y,Z}(F)$ and $\Sigma(F)$ as  $M^*_{Y}$, $M^*_{Z}$, $M^*_{Y,Z}$ and $\Sigma^*$, respectively. For notation simplicity, we omit the Using the product rule for Gateaux derivatives, we obtain the influence function of $M^*_{Y,Z}$.
\begin{lemma}
    Suppose $M_{Y}(F)$, $M_{Z}(F)$ and $\Sigma(F)$ are Hadamard differentiable. Then, the influence function of the reduction functional $M_{Y,Z}(F)$ is
    \begin{align*}
        M^*_{Y,Z} = M_Y^*\,\Sigma(F)\,M_Z(F) \;+\; M_Y(F)\,\Sigma^*\,M_Z(F)
\;+\; M_Y(F)\,\Sigma(F)\,M_Z^*.
    \end{align*}
\end{lemma}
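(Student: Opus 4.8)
The plan is to read off the influence function as the Gateaux derivative of the matrix-valued functional $M_{Y,Z}$ along the standard contamination path and then to differentiate the triple product by the Leibniz rule. Concretely, fix a sample point $S=(x,y,z)$ and set $F_\varepsilon=(1-\varepsilon)F_0+\varepsilon\delta_S$. Write $A(\varepsilon)=M_Y(F_\varepsilon)$, $B(\varepsilon)=\Sigma(F_\varepsilon)$, and $C(\varepsilon)=M_Z(F_\varepsilon)$, so that $M_{Y,Z}(F_\varepsilon)=A(\varepsilon)B(\varepsilon)C(\varepsilon)$. By the definition of the influence function together with the Hadamard differentiability hypothesis, each factor is differentiable at $\varepsilon=0$, with $A(0)=M_Y(F)$, $B(0)=\Sigma(F)$, $C(0)=M_Z(F)$ and derivatives $A'(0)=M_Y^*$, $B'(0)=\Sigma^*$, $C'(0)=M_Z^*$. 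The task then reduces to showing that $\frac{d}{d\varepsilon}[ABC]\big|_{\varepsilon=0}$ obeys the product rule.

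First I would observe that matrix multiplication $(A,B,C)\mapsto ABC$ is a continuous trilinear map between the relevant finite-dimensional matrix spaces. Since the factors are matrices of fixed size, every entry of $A(\varepsilon)B(\varepsilon)C(\varepsilon)$ is a polynomial in the entries of $A(\varepsilon)$, $B(\varepsilon)$, $C(\varepsilon)$, each of which is a scalar function of $\varepsilon$ differentiable at $0$. Applying the ordinary scalar product rule entrywise — equivalently, invoking the chain rule for Hadamard differentiable maps composed with the continuous multiplication map — yields
\[
\frac{d}{d\varepsilon}\big[A(\varepsilon)B(\varepsilon)C(\varepsilon)\big] = A'(\varepsilon)B(\varepsilon)C(\varepsilon) + A(\varepsilon)B'(\varepsilon)C(\varepsilon) + A(\varepsilon)B(\varepsilon)C'(\varepsilon),
\]
where the noncommutativity of matrix multiplication is respected by keeping each differentiated factor in its original position. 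Evaluating at $\varepsilon=0$ and substituting the factor values and their influence functions gives exactly $M^*_{Y,Z}=M_Y^*\,\Sigma(F)\,M_Z(F)+M_Y(F)\,\Sigma^*\,M_Z(F)+M_Y(F)\,\Sigma(F)\,M_Z^*$.

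The only genuine obstacle is justifying that the functional-level derivative commutes with the triple product, i.e. that the Leibniz rule is valid for Gateaux/Hadamard derivatives and not merely for ordinary scalar derivatives. This is precisely where the Hadamard differentiability assumption is used: Hadamard differentiability is preserved under composition with continuous (multi)linear maps and satisfies the chain rule, so the composite $\varepsilon\mapsto A(\varepsilon)B(\varepsilon)C(\varepsilon)$ is itself Hadamard differentiable with the derivative displayed above. If one prefers to avoid the abstract differentiability machinery, the finite dimensionality makes the argument elementary — each matrix entry is a differentiable scalar functional and the product rule applies coordinatewise — so that nothing beyond entrywise differentiability of the factors is in fact required.
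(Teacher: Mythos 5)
Your argument is correct and matches the paper's own justification: the paper simply invokes the product rule for Gateaux derivatives along the contamination path $F_\varepsilon=(1-\varepsilon)F_0+\varepsilon\delta_S$, which is exactly your Leibniz-rule computation on the triple product $M_Y(F_\varepsilon)\Sigma(F_\varepsilon)M_Z(F_\varepsilon)$. Your additional remarks on why the product rule is licensed (entrywise differentiability in finite dimensions, or the chain rule for Hadamard derivatives composed with the continuous trilinear multiplication map) only make explicit what the paper leaves implicit.
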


\subsection{Asymptotic Normality of Estimators}

Let $f(X;\theta)$ be the differentiable predictive function parametrized by $\theta \in \Theta$ and let $L(x,y;\theta)$ be the differentiable loss function with respect to $\theta$, where we fold in any regularization terms into $L$. For notation simplicity, we denote $L(x,y;\theta)$ by $L(S;\theta)$. Then the population and empirical risk minimizer of the parameter is given by
$$
\widetilde{\theta} = \theta(F_0) \triangleq \arg\min_{\theta \in \Theta} \mathbb{E}[L(S,\theta)],\quad \widehat{\theta}_n = \theta(F_n) \triangleq \arg\min_{\theta \in \Theta} \frac{1}{n} \sum_{i=1}^nL(S_i,\theta)
$$
Similarly, we define the parameters estimated by sufficient variables $S^{(m)}=(P^{(m)}x, y, z)$ as
$$
\begin{aligned}
\widetilde{\theta}^{(m)}  = \theta^{(m)}(F_0, P^{(m)}(F_0)) & \triangleq \arg\min_{\theta \in \Theta} \mathbb{E}[L(S^{(m)},\theta)]\\
\widehat{\theta}_n^{(m)}  = \theta^{(m)}(F_n, P^{(m)}(F_n)) & \triangleq \arg\min_{\theta \in \Theta} \frac{1}{n} \sum_{i=1}^nL(\widehat S^{(m)}_i,\theta),
\end{aligned}
$$
where $P^{(m)}(F_n)=\widehat{P}^{(m)}$ and $\widehat S^{(m)}=(\widehat P^{(m)}x, y, z)$. We refer to $\widehat{\theta}_n$ as the original estimator and $\widehat{\theta}_n^{(m)}$ as the fair estimator. The entire post-SDR learning procedure can be viewed as a composition of three mappings:
\[
F_{n}
\;\xrightarrow{\text{reduction}}\;
{P}^{(m)}(F_{n})
\;\xrightarrow{\text{estimation}}\;
\theta^{(m)}\bigl(F_{n},\,{P}^{(m)}(F_{n})\bigr)
\;\xrightarrow{\text{prediction}}\;
f\bigl(\,\cdot\,;\,\theta^{(m)}\bigl(F_{n},{P}^{(m)}(F_{n})\bigr)\bigr).
\]
The composition mapping allows us to derive the asymptotic normality of both the estimators and predictors after applying fair projections, as stated in the following theorems.

\begin{theorem}\label{thm-asy}
Suppose all the statistical functionals presented above are Hadamard differentiable, and their influence functions has zero expectation and finite variance, then we have the following asymptotic normality for the estimator $\widehat{\theta}_n^{(m)}$
\begin{align}
    \sqrt{n}(\widehat{\theta}_n^{(m)} - \widetilde{\theta}^{(m)}) \xrightarrow{\mathcal{D}} \mathcal{N}\left(0, \operatorname{Var}\bigl[-H^{-1}_{\widetilde{\theta}^{(m)}}G^{(m)}+ D^{(m)}\operatorname{vec}(P^{(m)*})\bigr]\right),
\end{align}
where $H^{-1}_{\widetilde{\theta}^{(m)}}=\mathbb{E}[\nabla_{\theta}^2L(S^{(m)},\widetilde{\theta}^{(m)})]$ is the expected Hessian of loss function, $G^{(m)}=\nabla_{\theta}L(S^{(m)}, \widetilde{\theta}^{(m)})$ is the gradient, $D^{(m)} = ({\partial\, \theta(F_0,P^{(m)}(F_0))}/{\partial \operatorname{vec}(P^{(m)})})^{\top}$, $\operatorname{vec}(\cdot)$ is vectorization of the matrix, and $P^{(m)*}$ denotes the influence function of $P^{(m)}(F)$, whose explicit form depends on the choice of SDR method and is provided in the Appendix.
\end{theorem}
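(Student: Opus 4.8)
The plan is to treat $\widehat{\theta}_n^{(m)}$ as the value of a single statistical functional $\theta^{(m)}(F, P^{(m)}(F))$ evaluated at the empirical law $F_n$, to compute its influence function by the chain rule, and then to invoke the functional delta method. The crucial observation is that $\widehat{\theta}_n^{(m)}$ depends on $F_n$ through \emph{two} channels that share the same sample: directly, through the empirical average of the loss, and indirectly, through the plug-in projection $\widehat{P}^{(m)} = P^{(m)}(F_n)$. Because the projection is estimated from the same data, it cannot be treated as a fixed nuisance; the influence-function framework handles this automatically by differentiating the composite map along a single perturbation $F_\varepsilon = (1-\varepsilon)F_0 + \varepsilon \delta_S$.

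First I would write the first-order optimality condition for the population minimizer at a \emph{fixed} projection $P$, namely $\Psi(P,\theta) := \mathbb{E}_{F_0}[\nabla_\theta L((Px,y,z);\theta)] = 0$, and apply the implicit function theorem at $(P_0,\widetilde{\theta}^{(m)})$ with $P_0 = P^{(m)}(F_0)$. Since $\partial\Psi/\partial\theta = H_{\widetilde{\theta}^{(m)}}$ is the Hessian, assumed nonsingular at the well-separated minimizer, this gives $\partial\theta/\partial\operatorname{vec}(P) = -H^{-1}_{\widetilde{\theta}^{(m)}}\,\partial\Psi/\partial\operatorname{vec}(P)$, which is precisely the sensitivity matrix $D^{(m)}$. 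This step identifies how the population solution responds to perturbations of the projection and needs only smoothness of $L$ in $(\theta,P)$ together with a domination condition allowing differentiation under the expectation.

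Next I would compute the two partial Gateaux derivatives of $\theta^{(m)}(F_\varepsilon, P^{(m)}(F_\varepsilon))$ at $\varepsilon=0$. Holding $P$ fixed, a standard von Mises expansion of the $M$-estimation functional yields the partial influence function $-H^{-1}_{\widetilde{\theta}^{(m)}} G^{(m)}$, where $G^{(m)} = \nabla_\theta L(S^{(m)},\widetilde{\theta}^{(m)})$ is the per-observation score. Perturbing only through the projection and using the chain rule gives the second contribution $D^{(m)}\operatorname{vec}(P^{(m)*})$, where $P^{(m)*}$ is the already-established influence function of the reduction functional (its explicit form, obtained from eigenvector perturbation of $M_{Y,Z}$, being deferred to the Appendix). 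Summing the two channels yields the total influence function $\theta^{(m)*}(S) = -H^{-1}_{\widetilde{\theta}^{(m)}} G^{(m)} + D^{(m)}\operatorname{vec}(P^{(m)*})$, so that the asymptotic-linearity representation $\sqrt{n}(\widehat{\theta}_n^{(m)} - \widetilde{\theta}^{(m)}) = n^{-1/2}\sum_{i} \theta^{(m)*}(S_i) + o_P(1)$ holds; the stated Gaussian limit then follows from the classical i.i.d. CLT, each influence function having zero mean and finite variance by hypothesis.

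I expect the main obstacle to be rigorously justifying that the composite map is Hadamard differentiable and that the linearization remainder is genuinely $o_P(n^{-1/2})$. The reduction step $F \mapsto P^{(m)}(F)$ is differentiable by assumption, so the difficulty concentrates in the argmin (M-estimation) step: one must verify a unique, well-separated minimizer, a nonsingular Hessian, and a stochastic-equicontinuity / Donsker condition on the class $\{\nabla_\theta L((Px,y,z);\theta)\}$ so that simultaneously replacing $P_0$ by $\widehat{P}^{(m)}$ and $\widetilde{\theta}^{(m)}$ by nearby values contributes only negligible higher-order terms. Once Hadamard differentiability of each piece is established, the chain rule for Hadamard derivatives and the functional delta method applied to $\sqrt{n}(F_n - F_0) \rightsquigarrow \mathbb{G}$ deliver the Gaussian limit with variance $\operatorname{Var}[\theta^{(m)*}(S)]$, the cross-dependence between the two channels being absorbed correctly because both partial derivatives act on the same limiting Gaussian process.
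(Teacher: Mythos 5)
Your proposal follows essentially the same route as the paper: both decompose the influence function of the composite functional $\theta^{(m)}(F,P^{(m)}(F))$ into the direct $M$-estimation channel $-H^{-1}_{\widetilde{\theta}^{(m)}}G^{(m)}$ and the projection channel $D^{(m)}\operatorname{vec}(P^{(m)*})$, then sum and apply the CLT; the paper obtains this decomposition by citing Theorem~1 and Proposition~1 of Kim et al.\ (2020), whereas you re-derive it via the implicit function theorem and the chain rule for Gateaux derivatives. Your added discussion of the regularity conditions (well-separated minimizer, nonsingular Hessian, Donsker/stochastic-equicontinuity control of the remainder) is more explicit than the paper's, which absorbs these into the blanket Hadamard-differentiability hypothesis.
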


\begin{corollary}\label{cor}
Under the same assumptions stated in Theorem \ref{thm-asy}, we have
\begin{align}
\hspace{-0.5em}\mathbb{E}(\|\widehat{\theta}_n^{(m)}-\widehat{\theta}_n\|^2)\leq \|\widetilde{\theta}^{(m)}-\widetilde{\theta}\|^2 + \frac1n\operatorname{Tr}\left(\operatorname{Var}\bigl[H^{-1}_{\widetilde{\theta}}G-H^{-1}_{\widetilde{\theta}^{(m)}}G^{(m)}+ D^{(m)}\operatorname{vec}(P^{(m)*})\bigr]\right),
\end{align}
where $H^{-1}_{\widetilde{\theta}}=\mathbb{E}[\nabla_{\theta}^2L(S,\widetilde{\theta})]$ and $G=\nabla_{\theta}L(S, \widetilde{\theta})$.
\end{corollary}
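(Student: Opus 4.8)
The plan is to begin from the exact algebraic identity
\[
\widehat{\theta}_n^{(m)} - \widehat{\theta}_n = (\widetilde{\theta}^{(m)} - \widetilde{\theta}) + \bigl[(\widehat{\theta}_n^{(m)} - \widetilde{\theta}^{(m)}) - (\widehat{\theta}_n - \widetilde{\theta})\bigr],
\]
which decomposes the gap between the fair and original estimators into a deterministic population bias $\widetilde{\theta}^{(m)} - \widetilde{\theta}$ and a stochastic fluctuation equal to the difference of the two centered estimators. The bias term is non-random and contributes exactly $\|\widetilde{\theta}^{(m)} - \widetilde{\theta}\|^2$ to the expected squared norm, accounting for the first summand on the right-hand side.

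Next I would linearize each centered estimator through its influence function. Theorem~\ref{thm-asy} already furnishes the asymptotically linear expansion
\[
\widehat{\theta}_n^{(m)} - \widetilde{\theta}^{(m)} = \frac{1}{n}\sum_{i=1}^n \bigl[-H^{-1}_{\widetilde{\theta}^{(m)}} G^{(m)}(S_i) + D^{(m)}\operatorname{vec}(P^{(m)*}(S_i))\bigr] + o_p(n^{-1/2}),
\]
while standard $M$-estimation theory gives $\widehat{\theta}_n - \widetilde{\theta} = \tfrac{1}{n}\sum_{i=1}^n \bigl(-H^{-1}_{\widetilde{\theta}} G(S_i)\bigr) + o_p(n^{-1/2})$, since the original estimator involves no projection. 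Subtracting, the fluctuation term becomes $\tfrac{1}{n}\sum_{i=1}^n \psi(S_i) + o_p(n^{-1/2})$, where
\[
\psi(S) = H^{-1}_{\widetilde{\theta}} G(S) - H^{-1}_{\widetilde{\theta}^{(m)}} G^{(m)}(S) + D^{(m)}\operatorname{vec}(P^{(m)*}(S))
\]
is precisely the combined influence function appearing inside the variance on the right-hand side. Because every constituent influence function has zero expectation by assumption, $\mathbb{E}[\psi(S)]=0$, and the $\psi(S_i)$ are i.i.d.

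I would then expand the expected squared norm. Writing $\Delta_n = \tfrac{1}{n}\sum_i \psi(S_i) + o_p(n^{-1/2})$, we have $\mathbb{E}\|\widehat{\theta}_n^{(m)} - \widehat{\theta}_n\|^2 = \|\widetilde{\theta}^{(m)} - \widetilde{\theta}\|^2 + 2(\widetilde{\theta}^{(m)} - \widetilde{\theta})^\top \mathbb{E}[\Delta_n] + \mathbb{E}\|\Delta_n\|^2$. The cross term vanishes to leading order because $\mathbb{E}[\psi(S_i)]=0$, and the i.i.d.\ structure gives $\mathbb{E}\bigl\|\tfrac{1}{n}\sum_i \psi(S_i)\bigr\|^2 = \tfrac{1}{n}\operatorname{Tr}(\operatorname{Var}[\psi(S_1)])$, which is exactly the second summand. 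Collecting the pieces yields the stated bound.

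The main obstacle is passing from the distributional convergence of Theorem~\ref{thm-asy} to a statement about second moments. Convergence in distribution does not by itself imply convergence of the mean squared error, so I would need a uniform-integrability argument ensuring the squared fluctuations are asymptotically equi-integrable, together with bounds showing that the product of the deterministic bias with the remainder and the squared remainder are of smaller order than $n^{-1}$. These higher-order contributions are precisely what the inequality $\leq$ absorbs: the right-hand side is the leading-order MSE, and the linearization remainders are swept into it as lower-order terms.
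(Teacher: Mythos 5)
Your proposal follows essentially the same route as the paper's proof: decompose the estimator gap into the deterministic population bias plus the difference of the two asymptotically linear (influence-function) expansions, identify the combined influence function $H^{-1}_{\widetilde{\theta}}G-H^{-1}_{\widetilde{\theta}^{(m)}}G^{(m)}+D^{(m)}\operatorname{vec}(P^{(m)*})$, and use the i.i.d.\ mean-zero structure to obtain the $\tfrac1n\operatorname{Tr}(\operatorname{Var}[\cdot])$ term. You are in fact more careful than the paper on the one delicate point—the passage from distributional convergence to a second-moment statement, which requires uniform integrability and control of the $o_p(n^{-1/2})$ remainders that the paper's appeal to ``the Central Limit Theorem'' silently absorbs.
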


\begin{theorem}\label{thm-pred-asy}
Let \(f(x;\theta)\) be the predictor evaluated at covariate \(x\). Under the same assumptions stated in Theorem \ref{thm-asy}, we have the following asymptotic normality for the predictor
\begin{align}
\sqrt{n}\left(f(x;\widehat{\theta}_n^{(m)}) - f(x;\widetilde{\theta}^{(m)})\right) \xrightarrow{\mathcal{D}} \mathcal{N}\left(0, g^{\top}\operatorname{Var}\bigl[-H^{-1}_{\widetilde{\theta}^{(m)}}G^{(m)}+ D^{(m)}\operatorname{vec}(P^{(m)*})\bigr]g\right),
\end{align}
where $g=\nabla_{\theta}f(x; \widetilde{\theta}^{(m)}))$.
\end{theorem}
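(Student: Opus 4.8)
The plan is to obtain this statement as a direct consequence of the asymptotic normality of the fair estimator established in Theorem~\ref{thm-asy}, combined with the delta method, treating the test covariate $x$ as fixed so that all randomness in the predictor enters through $\widehat{\theta}_n^{(m)}$. For brevity write $\Sigma^{(m)} := \operatorname{Var}\bigl[-H^{-1}_{\widetilde{\theta}^{(m)}}G^{(m)} + D^{(m)}\operatorname{vec}(P^{(m)*})\bigr]$ for the asymptotic covariance of the estimator, so that Theorem~\ref{thm-asy} gives $\sqrt{n}(\widehat{\theta}_n^{(m)} - \widetilde{\theta}^{(m)}) \xrightarrow{\mathcal{D}} \mathcal{N}(0, \Sigma^{(m)})$. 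Since $f(x;\cdot)$ is assumed differentiable in $\theta$, a first-order expansion of the predictor around $\widetilde{\theta}^{(m)}$ will linearize it and reduce the claim to the image of a centered Gaussian under the fixed linear map $v \mapsto g^{\top}v$, with $g = \nabla_\theta f(x;\widetilde{\theta}^{(m)})$.

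First I would record that $\sqrt{n}(\widehat{\theta}_n^{(m)} - \widetilde{\theta}^{(m)})$ converging in distribution forces it to be $O_p(1)$, whence $\widehat{\theta}_n^{(m)} - \widetilde{\theta}^{(m)} = O_p(n^{-1/2})$ and in particular $\widehat{\theta}_n^{(m)} \xrightarrow{p} \widetilde{\theta}^{(m)}$. Applying the mean value theorem (or a stochastic Taylor expansion) to the differentiable map $\theta \mapsto f(x;\theta)$ yields
\[
f(x;\widehat{\theta}_n^{(m)}) - f(x;\widetilde{\theta}^{(m)}) = \nabla_\theta f(x;\bar{\theta}_n)^{\top}\bigl(\widehat{\theta}_n^{(m)} - \widetilde{\theta}^{(m)}\bigr),
\]
where $\bar{\theta}_n$ lies on the segment joining $\widehat{\theta}_n^{(m)}$ and $\widetilde{\theta}^{(m)}$. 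Multiplying by $\sqrt{n}$ gives $\sqrt{n}\bigl(f(x;\widehat{\theta}_n^{(m)}) - f(x;\widetilde{\theta}^{(m)})\bigr) = \nabla_\theta f(x;\bar{\theta}_n)^{\top}\,\sqrt{n}(\widehat{\theta}_n^{(m)} - \widetilde{\theta}^{(m)})$. Because $\bar{\theta}_n$ is squeezed by consistency, $\bar{\theta}_n \xrightarrow{p} \widetilde{\theta}^{(m)}$, and by continuity of the gradient $\nabla_\theta f(x;\bar{\theta}_n) \xrightarrow{p} g$, a deterministic vector for fixed $x$. Slutsky's theorem then combines this probability limit with the distributional limit of $\sqrt{n}(\widehat{\theta}_n^{(m)} - \widetilde{\theta}^{(m)})$ to give $g^{\top}\mathcal{N}(0,\Sigma^{(m)})$, which is exactly $\mathcal{N}(0, g^{\top}\Sigma^{(m)} g)$ since a linear image of a centered Gaussian is centered Gaussian with the sandwiched covariance.

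The main obstacle I anticipate is rigorously controlling the linearization remainder under the stated smoothness alone: if $f(x;\cdot)$ is only differentiable at $\widetilde{\theta}^{(m)}$ rather than $C^1$ on a neighborhood, the mean value form is unavailable and one must instead bound the remainder $R_n = f(x;\widehat{\theta}_n^{(m)}) - f(x;\widetilde{\theta}^{(m)}) - g^{\top}(\widehat{\theta}_n^{(m)} - \widetilde{\theta}^{(m)})$ as $o_p(\|\widehat{\theta}_n^{(m)} - \widetilde{\theta}^{(m)}\|) = o_p(n^{-1/2})$, so that $\sqrt{n}\,R_n = o_p(1)$ is absorbed by Slutsky. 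A secondary point worth flagging is that this argument fixes the prediction point $x$; if the deployed predictor also ingests the estimated projection through $\widehat{P}^{(m)}x$, the extra sensitivity of $f$ to $P^{(m)}$ would propagate into $g$ and $D^{(m)}$ by the chain rule, but under the convention $g = \nabla_\theta f(x;\widetilde{\theta}^{(m)})$ adopted in the statement this contribution is already carried inside the asymptotic law of $\widehat{\theta}_n^{(m)}$ inherited from Theorem~\ref{thm-asy}.
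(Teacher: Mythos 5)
Your proposal is correct and follows essentially the same route as the paper: a first-order Taylor/delta-method linearization of $f(x;\cdot)$ at $\widetilde{\theta}^{(m)}$, the $O_p(n^{-1/2})$ rate inherited from Theorem~\ref{thm-asy} to kill the remainder, and Slutsky's theorem to conclude $g^{\top}\mathcal{N}(0,\Sigma^{(m)})=\mathcal{N}(0,g^{\top}\Sigma^{(m)}g)$. Your added remarks on controlling the remainder under mere pointwise differentiability and on the role of $\widehat{P}^{(m)}x$ are reasonable refinements but do not change the argument.
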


Theorems~\ref{thm-asy} and \ref{thm-pred-asy} provide feasible inference procedures for the fair estimators corresponding to each $m$, which are useful for subsequent statistical inference or hypothesis testing. Corollary~\ref{cor} follows directly from Theorem~\ref{thm-asy} by comparing the asymptotic distributions of the fair and original estimators. It shows that the expected distance between the fair and original estimators is upper bounded by the distance between their respective true parameters and the sampling variability.

Moreover, the projection matrix $P^{(m)}$ plays a crucial role in shaping the asymptotic variance of the fair estimator. By restricting the estimation to a subspace spanned by $P^{(m)}$, it reduces the dimensionality of the parameter space, leading to smaller asymptotic variance. However, smaller $m$ also implies that important predictive directions are truncated and inflate the $\|\widetilde{\theta}^{(m)}-\widetilde{\theta}\|^2$ term. Thus, $P^{(m)}$ provides a natural mechanism to balance variance reduction and bias introduction.

\section{Experiments}\label{sec:experiments}

\subsection{Simulation Studies}
\label{simulation}
In this section, we use simulation results to justify the behavior of using the sufficient variables under the fairness setting. Consider the multivariate linear regression, we simulate a dataset \( (\bs{X}, \bs{Y}, \bs{Z}) \), where the sensitive attribute \( \bs{Z} \in \{0,1\} \) is binary.

Let $A$ be a randomly generated matrix with columns are orthonormal. Denote $A_Y\in\mathbb{R}^{p\times q}$ is the first $q$ columns of $A$ and $A_Z\in\mathbb{R}^{p\times r}$ is the $q-s$ to $q-s+r$ columns of A, which means $A_Y$ and $A_Z$ has $s$ shared columns. Define the latent variables $\bs{U}_Y = \bs{X} A_YA_Y^{\top}$ and $\bs{U}_Z = \bs{X} A_Z A_Z^{\top}$. Let $\bs{X}\in\mathbb{R}^p$ drawn from a standard multivariate normal distribution: \( \bs{X} \sim \mathcal{N}(\bs{0}, I_{p})\). The response $\bs{Y}\in\mathbb{R}^K$ is generated from
$$
\bs{Y} = \bs{U}_{Y} \bs{\theta} + \varepsilon_{Y}, \quad \varepsilon_{Y} \sim \mathcal{N}(0, 0.5^2 I_K),
$$
where the coefficients $\bs{\theta}\in\mathbb{R}^{p\times K}$ are randomly generated with each entry samples from $\mathcal{N}(1,1)$. The sensitive variables $\bs{Z}$ is generated based on the following latent score
$$
\text{\( \bs{Z} = 1 \) if \( \xi \ge 1 \), and \( \bs{Z} = 0 \) otherwise;}\quad \text{where }\xi = \frac{1}{p} \sum_{j=1}^{p} \tanh([\bs{U}_Z]_{j}) + \varepsilon_{Z} \text{ and } \varepsilon_{Z} \sim \mathcal{N}(0,1).
$$
Finally, we introduce a distributional shift between groups by applying a non-linear transformation to the \( \bs{X} \) samples based only on the span of \( A_Z \). Specifically, for all samples \( \bs{X} \) with \( \bs{Z} = 1 \),
\[
\bs{X} \leftarrow \bs{X} + 0.5 \cdot \exp\left(\bs{X} A_Z A_Z^\top\right).
\]
We generate synthetic data with parameters \( n = 5000 \), \( p = 10 \), \( K = 5 \), \( q = 8 \), \( r = 8 \), and \( s = 6 \). The dataset is randomly split into 4000 training samples and 1000 testing samples. We fit a multivariate ordinary least squares (OLS) model on both the original features \( \bs{X} \) and the projected representations \( P^{(m)}\bs{X} \) to estimate the model parameters \( \widehat{\bs{\theta}} \).

To assess performance, we repeat the entire process over 30 independent replications. For each trial, we compute the root mean squared error (RMSE) on the test set overall and separately for the subgroups \( \bs{Z} = 0 \) and \( \bs{Z} = 1 \), along with the RMSE gap between groups. We also report the parameter distance \( \|\widehat{\bs{\theta}}_n^{(m)} - \widehat{\bs{\theta}}_n\| \), where \( \widehat{\bs{\theta}}_n^{(m)} \) is the OLS estimator obtained from the projected data \( P^{(m)} \bs{X} \), and \( \widehat{\bs{\theta}}_n \) is the baseline estimator from the original data. Results are averaged over the 30 trials and summarized in the left panel of Figure~\ref{fig:simulation}. We also project $\bs{X}$ onto the direction that best discriminates the sensitive attribute $\bs{Z}$ using Linear Discriminant Analysis (LDA), in order to visualize the distributional discrepancy between the two sensitive groups.

\begin{figure}
\centering
\includegraphics[width=0.45\linewidth]{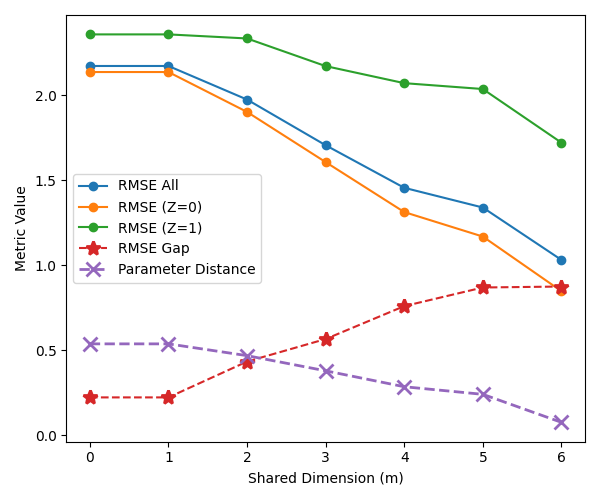}\includegraphics[width=0.5\linewidth]{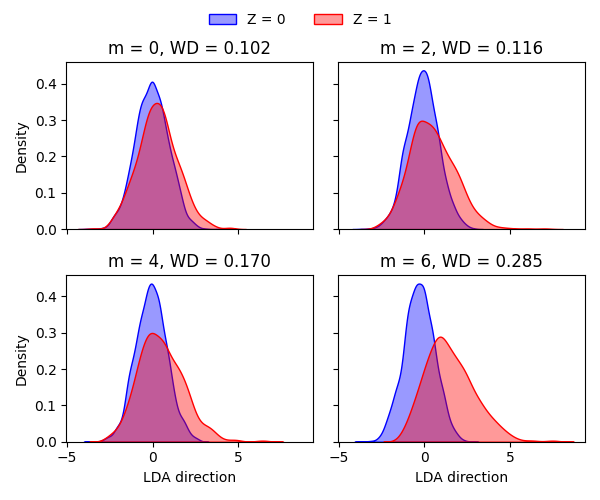}
\caption{Left panel: Trends of RMSE and parameter distance as the number of shared dimensions increases, averaged over 30 replications. Right panel: Distributional discrepancy between sensitive groups visualized via LDA, along with the average Wasserstein distance (WD) across all \( p \) dimensions between the original and projected data as the shared dimension increases in one replication.}
    \label{fig:simulation}
\end{figure}

The simulation results clearly support our theoretical findings. Specifically, as the shared dimension increases, more directions in $\Phi$ are used to predict the target variable, incorporating additional sensitive and predictive information. This leads to a reduction in the MSE of the predictor and the parameter distance to the original estimator, but an increase in the MSE gap between sensitive groups. Both the distribution discrepancy and the Wasserstein distance also increase, indicating growing divergence between groups as more sensitive information is included.

\subsection{Real Data Applications}

We evaluate our proposed \textbf{Sequential Fair Projection (SFP)} method on two tabular datasets: \textbf{Adult} \cite{kohavi1996scaling} and \textbf{Bank} \cite{moro2014data}. The \textbf{Adult} dataset contains personal data from over 40K individuals, with the task of predicting whether annual income exceeds \$50K; gender is used as the sensitive attribute. The \textbf{Bank} dataset originates from Portuguese marketing campaigns, where the goal is to predict whether a client will subscribe to a deposit; age (over/under 25) is treated as the sensitive attribute. The data is standardized during preprocessing and split into training, validation, and test sets with a ratio of 70\%:10\%:20\%.

We compare \textbf{SFP} against the following baselines: \textbf{Logistic Regression (LR)}, \textbf{AdvDebias} \cite{zhang2018mitigating}, \textbf{FairMixup} \cite{chuang2021fair}, \textbf{DRAlign} \cite{li2023fairer}, \textbf{DiffMCDP} \cite{jin2024on}, \textbf{INLP} \cite{ravfogel2020null}, \textbf{RLACE} \cite{ravfogel2022linear}, and \textbf{SUP} \cite{shi2024debiasing}. Fairness is evaluated using TPR gap, DP, and MCDP, and utility is evaluated using accuracy. The detailed experimental settings are shown in Appendix \ref{appendix:experiment}. All methods are repeated 20 times, and we report the average performance with standard deviations. The results are presented in Table \ref{tab:adult-bank-sidebyside}.

\begin{table*}[!htbp]
\centering
\caption{Performance metrics on the Adult and Bank datasets over 20 replications. Optimal results are in \textbf{bold}, and sub-optimal results are \underline{underlined}.}
\label{tab:adult-bank-sidebyside}
\begin{minipage}{0.48\textwidth}
\centering
\scriptsize
\textbf{Adult Dataset}
\vspace{0.1cm}
\begin{tabular}{l@{\hskip 5pt}r@{\hskip 5pt}r@{\hskip 5pt}r@{\hskip 5pt}r}
\toprule
\textbf{Method} & \textbf{Accuracy $\uparrow$} & \textbf{DP $\downarrow$} & \textbf{TPR $\downarrow$} & \textbf{MCDP $\downarrow$} \\
\midrule
LR        & 84.90 & 17.37 & 6.87 & 35.12 \\
AdvDebias       & \underline{76.49 (0.65)} & 12.45 (1.56) & 5.27 (0.53) & 29.32 (2.31) \\
FairMixup       & 74.48 (0.43) & \underline{3.93 (1.34)} & 5.11 (0.34) & 24.91 (1.58) \\
DRAlign         & 75.01 (0.41) & 7.58 (1.03) & 4.78 (0.29) & 22.04 (1.22) \\
DiffMCDP        & 73.93 (0.31) & 5.92 (1.25) & 5.33 (0.46) & 11.50 (1.09) \\
INLP            & 68.27 (0.57) & 4.16 (0.34) & 4.79 (0.51) & 8.58 (1.24) \\
RLACE           & 72.79 (0.83) & 5.24 (0.56) & 4.56 (0.31) & \textbf{7.45 (0.86)} \\
SUP             & 70.53 (0.36) & 4.33 (0.27) & \textbf{4.37 (0.29)} & \underline{8.16 (1.25)} \\
\bottomrule
\textbf{SFP (Ours)}    & \textbf{76.88 (0.47)} & \textbf{3.78 (0.15)} & \underline{4.43 (0.26)} & 10.52 (0.72) \\
\bottomrule
\end{tabular}
\end{minipage}
\hfill
\begin{minipage}{0.48\textwidth}
\centering
\scriptsize
\textbf{Bank Dataset}
\vspace{0.1cm}
\begin{tabular}{l@{\hskip 5pt}r@{\hskip 5pt}r@{\hskip 5pt}r@{\hskip 5pt}r}
\toprule
\textbf{Method} & \textbf{Accuracy $\uparrow$} & \textbf{DP $\downarrow$} & \textbf{TPR $\downarrow$} & \textbf{MCDP $\downarrow$} \\
\midrule
LR        & 91.17 & 6.72 & 2.62 & 26.28 \\
AdvDebias       & 60.56 (1.00) & 2.01 (1.36) & 2.20 (0.13) & 17.55 (1.39) \\
FairMixup       & 59.48 (1.96) & 1.07 (0.25) & 2.21 (0.24) & 13.18 (2.94) \\
DRAlign         & 59.12 (1.56) & 1.16 (0.41) & \textbf{1.96 (0.21)} & 13.61 (1.23) \\
DiffMCDP        & 60.01 (1.83) & 1.19 (0.39) & \underline{2.18 (0.13)} & 11.00 (0.97) \\
INLP            & 70.13 (0.86) & \underline{0.93 (0.21)} & 2.38 (0.31) & \underline{10.44 (0.63)} \\
RLACE           & \underline{72.51 (0.53)} & 1.02 (0.35) & 2.26 (0.27) & \textbf{8.98 (0.38)} \\
SUP             & 70.82 (0.47) & 0.96 (0.16) & 2.51 (0.22) & 10.63 (0.42) \\
\bottomrule
\textbf{SFP (Ours)}    & \textbf{89.66 (0.27)} & \textbf{0.51 (0.26)} & 2.33 (0.27) & 10.57 (0.34) \\
\bottomrule
\end{tabular}
\end{minipage}
\end{table*}

Our results show that SFP consistently strikes an effective balance between predictive accuracy and fairness. On both the Adult and Bank datasets, SFP achieves competitive or superior accuracy compared to existing fairness-aware methods while substantially reducing group-level disparities. On the Adult dataset, SFP attains the highest accuracy (76.88\%) and the lowest DP gap, demonstrating strong control over statistical bias. It also performs well in terms of TPR gap and MCDP, with only minor trade-offs relative to sub-optimal methods. On the Bank dataset, SFP also achieves the highest accuracy (89.66\%) and the lowest DP gap (0.51), indicating minimal disparity. Although RLACE slightly outperforms in MCDP, SFP maintains competitive fairness across all metrics. These findings confirm that SFP is a practical and flexible framework for fairness-aware learning, enabling users to suppress sensitive information while preserving task-relevant predictive power.

\subsection{Model Misspecification}

In practice, the linear SDR assumption could be violated and the estimated central subspaces will fail to capture the full conditional independence structure. Therefore, conditional independence tests can be applied to verify whether the estimated projections preserve sufficiency. 
When the linear SDR assumption is strongly violated, the estimated matrices $M_Y$ and $M_Z$ may lose their low-rank structure. This indicates that no low-dimensional linear subspace can capture the dependence between $(X, Y)$ or $(X, Z)$. In such cases, one practical remedy is to retain a subset of directions corresponding to the leading eigenvalues and $M_{Y,Z}$ and construct a fair projection by eliminating leading directions.

To examine the trade-off between utility and fairness of SFP under model misspecification, we conduct additional experiments based on the simulation setup described in Section~\ref{simulation}, with an added nonlinear term $\|\bs{X}\|_2^2/p$ in the generating processes of both $\bs{Y}$ and $\xi$. This modification violates the linear SDR assumption. We set $p=30$, $K=5$, and $q=r=s=30$, making it impossible to recover a low-rank representation through SDR. After applying SFP, the estimated shared dimension is $\widehat{s}=28$, indicating that the rank of $\widehat{M}_{Y,Z}$ is $28$. We then sequentially add the directions obtained from $\widehat{M}_{Y,Z}$ and report the resulting trend in Figure~\ref{fig:simulation_mis}.

The overall RMSE and RMSE gap using the original representation are $3.18$ and $3.06$, respectively. As shown in Figure~\ref{fig:simulation_mis}, there is a clear trend that, as the number of shared dimensions used to construct the projection matrix increases, both the overall MSE and the parameter distance decrease, indicating that the projected representation gradually approaches the information contained in the original features. Meanwhile, the RMSE gap and the distributional discrepancy increase as more sensitive information is included. This demonstrates that SFP can still capture the trade-off between utility and fairness even when the linear SDR assumption is violated.

\begin{figure}
\centering
\includegraphics[width=0.45\linewidth]{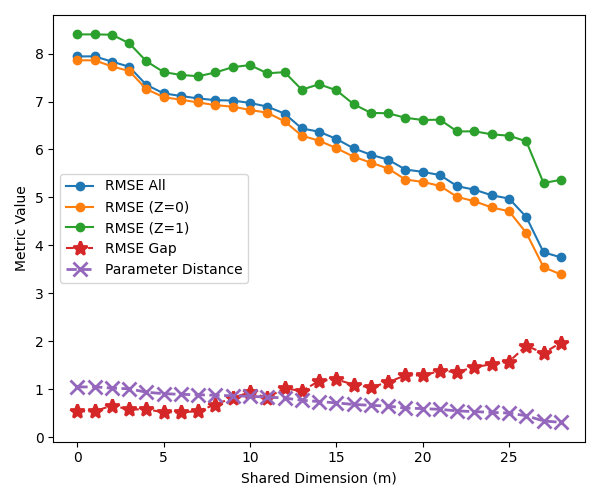}\includegraphics[width=0.5\linewidth]{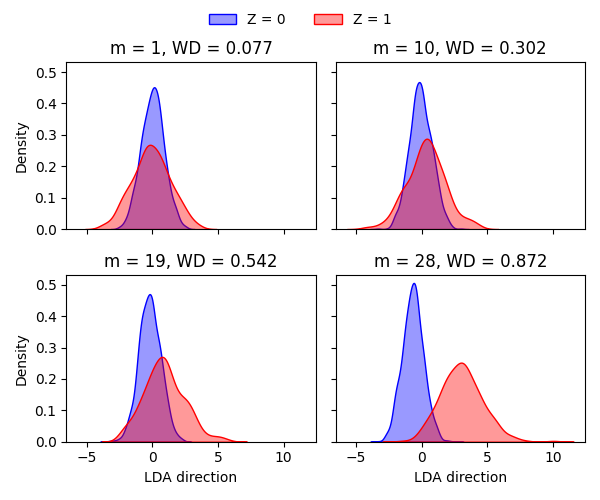}
\caption{Trends of RMSE, parameter distance and distributional discrepancy as the number of shared dimensions increases when the linear SDR assumption is violated.}
\label{fig:simulation_mis}
\end{figure}

\section{Discussion}
We propose a principled and model-agnostic framework for fairness-aware learning through subspace decomposition of data representations. Our method manipulates the representation space by selectively removing shared information between the target and sensitive attributes, which enables flexible control over the fairness-utility trade-off. We provide theoretical guarantees on how prediction error and fairness metrics evolve as more sensitive information is incorporated, and further apply influence function analysis to characterize the impact of partially fair representations on estimator behavior. Empirical results on both synthetic and real-world datasets validate our theoretical insights, demonstrating that the proposed SFP method achieves good performance.

\section*{Acknowledgements}

Bei Jiang and Linglong Kong were partially supported by grants from the Canada CIFAR AI Chairs program, the Alberta Machine Intelligence Institute (AMII), and Natural Sciences and Engineering Council of Canada (NSERC), and Linglong Kong was also partially supported by grants from the Canada Research Chair program from NSERC.

\bibliography{reference}
\bibliographystyle{plain}







\newpage

\appendix

\section{Influence Function of Projections}

In this section, we give explicit form of the influence function of projection $P^{(m)}$, which can used to derive the asymptotic normality of the parameter estimators.

Let $\{(\lambda_{\phi,i},\phi_i)\}_{i=1}^s$ be the eigenvalues and associated eigenvectors of $M_{Y,Z}$, $\{(\lambda_{\beta,i},\beta_i)\}_{i=1}^q$ be the eigenvalues and associated eigenvectors of $M_{Y}$ and $\{(\lambda_{\psi,i},\psi_i)\}_{i=1}^r$ be the eigenvalues and associated eigenvectors of $M_{Z}$.

Then, by \cite{zhu1996asymptotics}, the influence function of the directions estimated by SDR techniques can be written as, for $i=1,\ldots,s$,
\begin{equation}
\phi_i^* = \sum_{\substack{j=1,\,j \neq i}}^{s}
\frac{\phi_{j} \phi_{j}^\top M_{Y,Z}^{\textbf{*}} \phi_{i}}{\lambda_{\phi,i} - \lambda_{\phi,j}}
\label{eq:influence_beta}
\end{equation}
And for $k=1,\ldots,q-s$
\begin{equation}
(Q_z\beta_k)^* = Q_z\beta_k^* + Q_z^*\beta_k =  Q_z\beta_k^* + \beta_k\left(I_p - \sum_{j=1}^r(\psi_j^*\psi_j^{\top}+\psi_j\psi_j^{*\top})\right)
\end{equation}
where
\begin{equation*}
\beta_k^* = \sum_{\substack{\ell=1,\,\ell \neq k}}^{q}
\frac{\beta_{\ell} \beta_{\ell}^\top M_{Y}^{\textbf{*}} \beta_{k}}{\lambda_{\beta,k} - \lambda_{\beta,\ell}} \text{\quad and \quad} \psi_j^* = \sum_{\substack{\ell=1,\,\ell \neq j}}^{r}
\frac{\psi_{\ell} \psi_{\ell}^\top M_{Z}^{\textbf{*}} \psi_{j}}{\lambda_{\psi,j} - \lambda_{\psi,\ell}} .
\end{equation*}
Then we have
$$
P^{(m)*} = (Q_z\beta_k)^*(Q_z\beta_k)^\top + (Q_z\beta_k)(Q_z\beta_k)^{*\top} + \Phi^*_m\Phi_m^{\top} + \Phi_m\Phi_m^{*\top},
$$
where $\Phi_m^{*}=(\phi_1^*,\ldots,\phi_m^*)$.

The influence functions for the candidate matrices $M_Y$ and $M_Z$ estimated via SDR techniques have been well studied. For details, we refer readers to Section 4 of \cite{kim2020post} and omit the derivations here.

\section{Experiments Details}
\label{appendix:experiment}

\subsection{Experiment Setup}

Throughout the experiments, we use the MSAVE method to estimate the candidate matrices \( M_Y \) and \( M_Z \), with the number of slices set to \( p + 1 \), where \( p \) denotes the dimensionality of the input \( \bs{X} \). In the post-SDR training procedure (as described in Algorithm~2), we apply multivariate linear regression for synthetic simulations and logistic regression for real-world datasets. The MCDP metric is used to select the optimal fair model \( \mathcal{M}_{\text{fair}}^{(m^*)} \). To estimate the rank of each candidate matrix, we adopt the ladle estimator with 30 bootstrap replications.

For baseline methods, we use the official code released by the respective authors. Specifically, for AdvDebias, FairMixup, DRAlign, and DiffMCDP, we use the implementation provided in \cite{jin2024on}. For INLP, we set the number of iterations to 100. For RLACE, we follow the training hyperparameters used in the original paper. For SUP, we adopt the same number of slices (\( p+1 \)) as in our method, and select the dimension to remove based on 10-fold cross-validation, choosing the model that achieves the lowest MCDP.

\subsection{Fairness Measurement}

We consider a multi-class classification setting where the target label $\bs{Y} \in \mathbb{R}^K$ is a one-hot encoded vector, i.e., $\bs{Y}_j = 1$ if the true label corresponds to class $j$ and $0$ otherwise. We denote the predicted probability vector as $\hat{\bs{Y}}$, and the sensitive attribute as a binary variable $\bs{Z} \in \{0,1\}$.

\paragraph{Demographic Parity (DP) Gap.}
Demographic Parity requires that the predicted output is independent of the sensitive attribute. For each class \( j \in \{1, \dots, K\} \), we define the group-wise expected predicted score as:
\[
\text{DP}_{z,j} = \mathbb{E}[\hat{\bs{Y}}_j \mid \bs{Z} = z],
\]
and the corresponding DP gap for class \( j \) as:
\[
\text{DP}_{\text{gap},j} = \text{DP}_{1,j} - \text{DP}_{0,j}.
\]
The overall DP gap across classes is then aggregated as:
\[
\text{DP}_{\text{gap}} = \sqrt{\frac{1}{K-1} \sum_{j=1}^{K-1} (\text{DP}_{\text{gap},j})^2} \times 100\%.
\]
When $K=2$, this definition reduces to the binary case, where $\text{DP}_{\text{gap}}=|\text{DP}_{1,j} - \text{DP}_{0,j}|$.

\paragraph{True Positive Rate (TPR) Gap.}
For each class \( j \in \{1, \dots, K\} \), the TPR for group \( z \in \{0, 1\} \) is defined as:
\[
\text{TPR}_{z,j} = \mathbb{P}(\hat{\bs{Y}}_j = 1 \mid \bs{Z} = z, \bs{Y}_j = 1),
\]
and the corresponding TPR gap is:
\[
\text{TPR}_{\text{gap},j} = \text{TPR}_{1,j} - \text{TPR}_{0,j}.
\]
We aggregate TPR gaps across all classes into a single fairness metric:
\[
\text{TPR}_{\text{gap}} = \sqrt{\frac{1}{K} \sum_{j=1}^K (\text{TPR}_{\text{gap},j})^2} \times 100\%.
\]

\paragraph{Maximal Cumulative Ratio Disparity along Predictions  (MCDP).}
For each class \( j \), define the group-wise cumulative distribution function:
\[
F_{z,j}(y) = \mathbb{P}(\hat{\bs{Y}}_j \leq y \mid \bs{Z} = z).
\]
The MCDP for class \( j \) is then the Kolmogorov–Smirnov distance between the distributions of predicted probabilities for the two sensitive groups:
\[
\text{MCDP}_j = \max_{y \in [0,1]} \left| F_{1,j}(y) - F_{0,j}(y) \right|.
\]
Since the components of $\hat{\bs{Y}}$ sum to 1 due to the softmax constraint, only \( K-1 \) of them are linearly independent. Therefore, we define the overall MCDP gap as:
\[
\text{MCDP}_{\text{gap}} = \sqrt{\frac{1}{K - 1} \sum_{j=1}^{K - 1} (\text{MCDP}_j)^2} \times 100\%.
\]
When \( K = 2 \), this definition reduces to the binary case as in \cite{jin2024on}.

\section{Proof of Theorems}

\begin{proof}[Proof of Theorem \ref{thm-1}]
Let \( B \) be an orthonormal basis for the central subspace \( \mathcal{S}_{\bs{Y} \mid \bs{X}} \), so that \( \mathbb{E}[\bs{Y} \mid \bs{X}] = \mathbb{E}[\bs{Y} \mid B^\top \bs{X}] \). Since \( Q_z \bs{X} \) is a measurable  function of \( \bs{X} \), and \( \bs{Y} \perp \bs{X} \mid B^\top \bs{X} \), it follows that \( \mathbb{E}[\bs{Y} \mid Q_z \bs{X}] = \mathbb{E}[\bs{Y} \mid B^\top Q_z \bs{X}] \). Noting that \( B^\top Q_z \bs{X} = (Q_z B)^\top \bs{X} \), we conclude that
\[
\mathbb{E}[\bs{Y} \mid Q_z \bs{X}] = \mathbb{E}[\bs{Y} \mid (Q_z B)^\top \bs{X}],
\]
which implies that \( Q_z B \) spans a sufficient subspace for \( \bs{Y} \) with respect to \( Q_z \bs{X} \).

By definition, the central subspace for \( \bs{Y} \mid Q_z \bs{X} \) is the minimal subspace satisfying this conditional independence. Therefore, if \( \widetilde{B} \) is any orthonormal matrix such that \( \mathbb{E}[\bs{Y} \mid Q_z \bs{X}] = \mathbb{E}[\bs{Y} \mid \widetilde{B}^\top \bs{X}] \), it must hold that
\[
\operatorname{Span}(\widetilde{B}) \subseteq \operatorname{Span}(Q_z B).
\]
and $\operatorname{Span}(\widetilde{B}) = \operatorname{Span}(Q_z B)$ if and only if $\operatorname{rank}(\widetilde{B}) = \operatorname{rank}(Q_z B)$
\end{proof}

\vspace{0.5em}

\begin{proof}[Proof of Theorem \ref{thm-2}]
We begin by substituting the model into the expected error:
\[
\mathbb{E}[(Y - \tilde{f}^{(m)})^2] = \mathbb{E}[(f^*(\bs{X}) + \varepsilon_Y - \tilde{f}^{(m)})^2]
= \mathbb{E}\left[ (f^*(\bs{X}) - \tilde{f}^{(m)})^2 + 2(f^*(\bs{X}) - \tilde{f}^{(m)})\varepsilon_Y + \varepsilon_Y^2 \right].
\]
Take expectations term-by-term. For the first term, we observe that \( \tilde{f}^{(m)} = \mathbb{E}[Y \mid \bs{\Xi}^{(m)}]\), which is the projection of \( Y \) onto a coarser sigma-algebra. Therefore $\tilde{f}^{(m)} = \mathbb{E}[f^*(\bs{X}) \mid \bs{\Xi}^{(m)}]$ and
\[
\mathbb{E}[(f^*(\bs{X}) - \tilde{f}^{(m)})^2] = \mathbb{E}[\operatorname{Var}(f^*(\bs{X}) \mid \bs{\Xi}^{(m)})].
\]
For the second term, since \( \mathbb{E}[\varepsilon_Y \mid \bs{\Xi}^{(m)}] = \mathbb{E}[\mathbb{E}[\varepsilon_Y \mid \bs{X}] \mid \bs{\Xi}^{(m)}] = 0 \), we have:
\[
\mathbb{E}[(f^*(\bs{X}) - \tilde{f}^{(m)})\varepsilon_Y] = 0.
\]
Finally, the third term is simply \( \mathbb{E}[\varepsilon_Y^2] = \sigma_Y^2 \). Putting everything together gives:
\[
\mathbb{E}[(Y - \tilde{f}^{(m)})^2] = \mathbb{E}[\operatorname{Var}(f^*(\bs{X}) \mid \bs{\Xi}^{(m)})] + \sigma_Y^2.
\]

Note that $\tilde{f}^{(m)}(\bs{\Xi}^{(m)})$ is the orthogonal projection of $f^*(\bs{X}) := \mathbb{E}[\bs{Y} \mid \bs{X}]$ onto $\sigma(\bs{\Xi}^{(m)})$ in $L^2$.
Since $\bs{\Xi}^{(m)} \subset \bs{\Xi}^{(m+1)}$, the projection becomes finer, and by the Pythagorean identity in $L^2$:
\[ \|f^*(\bs{X}) - \tilde{f}^{(m+1)}\|_{L^2}^2 \le \|f^*(\bs{X}) - \tilde{f}^{(m)}\|_{L^2}^2. \]
Hence, the approximation error $\Delta(m) := \mathbb{E}[\operatorname{Var}(f^*(\bs{X}) \mid \bs{\Xi}^{(m)})]$ satisfies $\Delta(m+1) \le \Delta(m)$.
\end{proof}

\vspace{0.5em}

\begin{proof}[Proof of Theorem \ref{thm-3}]
From the definition of distance covariance for scalar \( X \) and binary \( Z \), we consider the V-statistic form:
\[
\operatorname{dCov}^2(X,Z) = \mathbb{E}[|X - X'| \cdot |Z - Z'|] + \mathbb{E}[|X - X'|] \cdot \mathbb{E}[|Z - Z'|] - 2\mathbb{E}[|X - X'| \cdot |Z - Z''|].
\]
Specializing to \( Z \in \{0,1\} \), we know:
\[
\mathbb{E}[|Z - Z'|] = \mathbb{E}[|Z - Z''|] = 2p(1-p),
\]
and \( |Z - Z'| = 1 \) only when \( Z \neq Z' \), meaning cross-group expectation. Then:
\begin{align*}
\mathbb{E}[|X - X'| \cdot |Z - Z'|] &= 2p(1 - p) \cdot \mathbb{E}[|X_0 - X_1|], \\
\mathbb{E}[|X - X'| \cdot |Z - Z''|] &= 2p(1 - p) \cdot \mathbb{E}[|X - X'|].
\end{align*}
So:
\[
\operatorname{dCov}^2(X, Z) = 2p(1 - p) \left( \mathbb{E}[|X_0 - X_1|] - \mathbb{E}[|X - X'|] \right).
\]
Substituting \( X = \tilde{f}^{(m)} \), we obtain the stated identity. Specifically, when $m=0$, we have $\tilde{f}^{(m)}=\tilde{f}^{(m)}_0=\tilde{f}^{(m)}_1$ and thereby the distance covariance equals zero.
\end{proof}

\vspace{0.5em}

\begin{proof}[Proof of Theorem \ref{thm:dual-characterization}]
Following \citet{luo2022efficient},  \(\operatorname{Span}\{\Phi\}\) is the unique solution of
\begin{equation}
  M_Y\,\Sigma\,P_{\Sigma,\Phi}\,M_Z
  \;=\;
  M_Y\,\Sigma\,M_Z,
  \label{eq:dual-eqn}
\end{equation}
where, for any full rank \(B\), $P_{\Sigma,B}$ is the \(\Sigma\)-orthogonal projector onto \(\mathrm{Span}\{B\}\).

Observe that
\[
  P_{\Sigma,\Phi}
  = \Sigma^{-1/2}\,P_U\,\Sigma^{1/2},
  \quad\text{where \quad}
  U = \Sigma^{1/2}\,\Phi,
  \quad
  P_U = U\,(U^\top U)^{-1}U^\top.
\]
Substituting into \eqref{eq:dual-eqn} yields
\[
  M_Y\,\Sigma^{1/2}\,P_U\,\Sigma^{1/2}\,M_Z
  = M_Y\,\Sigma\,M_Z.
\]
Left and right multiplying by \(\Sigma^{-1/2}\) gives
\[
  A\,P_U\,B = E,
  \quad
  A = \Sigma^{-1/2}M_Y\,\Sigma^{1/2},
  \quad
  B = \Sigma^{1/2}M_Z\,\Sigma^{-1/2},
  \quad
  E = A\,B = \Sigma^{-1/2}M_Y\,\Sigma\,M_Z\,\Sigma^{-1/2}.
\]
Since \(\mathrm{range}(A^\top)\subseteq\mathrm{range}(U)\) and \(\mathrm{range}(B)\subseteq\mathrm{range}(U)\), we have \(A\,P_U = A\) and \(P_U\,B = B\).  It follows that
\[
  E\,P_U = E.
\]
Because \(E\) is symmetric, its nonzero eigenvalue subspace coincides with \(\mathrm{range}(U)\).  Writing the spectral decomposition
\[
  E = V\,\Lambda\,V^\top,
  \quad
  V^\top V = I,
\]
and setting \(\Phi = \Sigma^{-1/2}V_{(s)}\) (the \(s\) leading eigenvectors), we recover the equivalent generalized eigenproblem
\[
  (M_Y\,\Sigma\,M_Z)\,\phi = \lambda\,\Sigma\,\phi,
\]
whose top \(s\) eigenvectors \(\{\phi_i\}\) span \(\mathrm{Span}\{\Phi\}\).
\end{proof}

\vspace{0.5em}

\begin{proof}[Proof of Theorem \ref{thm-asy}]
It is well-known that, if $\theta^{(m)}(F)$ is Hadamard differentiable, then it satisfies the following expansion
$$
\begin{aligned}
    \theta^{(m)}(F_n, P^{(m)}(F_n)) & = \theta^{(m)}(F_0, P^{(m)}(F_0)) + \mathbb{E}_n[\theta^{(m)*}] + o_p(n^{-1/2})\\
    & = \theta^{(m)}(F_0, P^{(m)}(F_0)) + \frac{1}{n}\sum_{i=1}^n\theta^{(m)*}(S_i, P^{(m)}(F_n)) + o_p(n^{-1/2})
\end{aligned}
$$
By Theorem 1 and Proposition 1 in \citet{kim2020post}, since $L(S,\theta)$ is differentiable, we have
$$
\begin{aligned}
\theta^{(m)*}(S, P^{(m)}(F_n))
& = \theta^{(m)*}(S, P^{(m)}(F_0)) + \left(\frac{\partial\, \theta(F_0,P^{(m)}(F_0))}{\partial \operatorname{vec}(P^{(m)})}\right)^{\top}\hspace{-0.2cm}\operatorname{vec}(P^{(m)*}(S))\\
& = -H^{-1}_{\widetilde{\theta}^{(m)}}\nabla_{\theta}L(S^{(m)}, \widetilde{\theta}^{(m)})+ \left(\frac{\partial\, \theta(F_0,P^{(m)}(F_0))}{\partial \operatorname{vec}(P^{(m)})}\right)^{\top}\hspace{-0.2cm}\operatorname{vec}(P^{(m)*}(S)),
\end{aligned}
$$
where $H^{-1}_{\widetilde{\theta}^{(m)}}=\mathbb{E}[\nabla_{\theta}^2L(S, \widetilde{\theta}^{(m)})]$. Therefore, we have
$$
\begin{aligned}
    \widehat\theta_n^{(m)}
    & = \widetilde\theta^{(m)} + \frac{1}{n}\sum_{i=1}^n\theta^{(m)*}(S_i, P^{(m)}(F_n)) + o_p(n^{-1/2})\\
    & = \widetilde\theta^{(m)} + \frac{1}{n}\sum_{i=1}^n\left[- H^{-1}_{\widetilde{\theta}^{(m)}}\nabla_{\theta}L(S^{(m)}_i, \widetilde{\theta}^{(m)})+ \left(\frac{\partial\, \theta(F_0,P^{(m)}(F_0))}{\partial \operatorname{vec}(P^{(m)})}\right)^{\top}\hspace{-0.2cm}\operatorname{vec}(P^{(m)*}(S_i))\right]+ o_p(n^{-1/2})
\end{aligned}
$$
Consequently, by the Central Limit Theorem
$$
\sqrt{n}(\widehat{\theta}_n^{(m)} - \widetilde{\theta}^{(m)}) \xrightarrow{\mathcal{D}} \mathcal{N}\left(0, \operatorname{Var}\bigl[-H^{-1}_{\widetilde{\theta}^{(m)}}\nabla_{\theta}L(S^{(m)}, \widetilde{\theta}^{(m)})+ D^{(m)}\operatorname{vec}(P^{(m)*})\bigr]\right),
$$
where $D^{(m)} = ({\partial\, \theta(F_0,P^{(m)}(F_0))}/{\partial \operatorname{vec}(P^{(m)})})^{\top}$.
\end{proof}

\vspace{0.5em}

\begin{proof}[Proof of Corollary \ref{cor}]
Similar as shown in Theorem \ref{thm-asy}, the original estimator satisfies the following expansion
$$
\begin{aligned}
    \widehat\theta_n
    & = \widetilde\theta + \frac{1}{n}\sum_{i=1}^n\theta^{*}(S_i) + o_p(n^{-1/2})\\
    & = \widetilde\theta + \frac{1}{n}\sum_{i=1}^n H^{-1}_{\widetilde{\theta}}\nabla_{\theta}L(S_i, \widetilde{\theta}^{(m)})+ o_p(n^{-1/2})
\end{aligned}
$$
Then we have
$$
\begin{aligned}
& (\widehat{\theta}_n^{(m)}-\widehat{\theta}_n ) -(\widetilde{\theta}^{(m)}-\widetilde{\theta})\\= &  \frac{1}{n}\sum_{i=1}^n\left[H^{-1}_{\widetilde{\theta}}\nabla_{\theta}L(S_i, \widetilde{\theta}^{(m)})- H^{-1}_{\widetilde{\theta}^{(m)}}\nabla_{\theta}L(S^{(m)}_i, \widetilde{\theta}^{(m)})+ \left(\frac{\partial\, \theta(F_0,P^{(m)}(F_0))}{\partial \operatorname{vec}(P^{(m)})}\right)^{\top}\hspace{-0.2cm}\operatorname{vec}(P^{(m)*}(S_i))\right]
\end{aligned}
$$
Therefore, by Central Limit Theorem, we have
$$
\begin{aligned}
\mathbb{E}(\|\widehat{\theta}_n^{(m)}-\widehat{\theta}_n\|^2)\leq \|\widetilde{\theta}^{(m)}-\widetilde{\theta}\|^2 + \frac1n\operatorname{Tr}\left(\operatorname{Var}\bigl[H^{-1}_{\widetilde{\theta}}G-H^{-1}_{\widetilde{\theta}^{(m)}}G^{(m)}+ D^{(m)}\operatorname{vec}(P^{(m)*})\bigr]\right),
\end{aligned}
$$
\end{proof}

\vspace{0.5em}

\begin{proof}[Proof of Theorem \ref{thm-pred-asy}]
Consider the first-order Taylor expansion of $f(x;\theta)$
$$
f(x;\widehat{\theta}_n^{(m)}) - f(x;\widetilde{\theta}^{(m)}) = \nabla_\theta f(x;\widetilde{\theta}^{(m)})^\top(\widehat{\theta}_n^{(m)} - \widetilde{\theta}^{(m)}) + o(\|\widehat{\theta}_n^{(m)} - \widetilde{\theta}^{(m)}\|).
$$
From Theorem \ref{thm-asy}, we know $\|\widehat{\theta}_n^{(m)} - \widetilde{\theta}^{(m)}\|=O_p(n^{-1/2})$. Then, following the same expansion as shown in the proof of Theorem \ref{thm-asy}, along with Slutsky's theorem, gives the results.
\end{proof}

\section{Limitations}

The current work builds upon linear SDR, which assumes that both the target label and the sensitive attribute depend on linear projections of the representation. While this is a relatively mild assumption, it limits our ability to capture and mitigate nonlinear dependencies that may be embedded in the representation.

A promising future direction is to extend the framework using nonlinear SDR theory. In this approach, the representation is first mapped into a higher-dimensional feature space using a predefined kernel function, denoted by \( \operatorname{Ker}(\bs{X}) \). The goal is then to find a projection matrix \( B_z \) such that
\[
\bs{Z} \perp\!\!\!\perp \bs{X} \mid B_z^{\top} \operatorname{Ker}(\bs{X}),
\]
enforcing conditional independence in the nonlinear space. Importantly, the core derivations and structural components of our method remain applicable in this generalized setting. Developing this extension is a promising direction for future research.

\section{Impact Statement}

The framework proposed in this paper tackles the fundamental challenge of fairness in machine learning by directly mitigating bias in learned data representations. Departing from post-hoc debiasing methods, we enforce fairness for new representations, enabling scalable and generalizable solutions for sensitive domains such as employment, healthcare, and finance. By balancing fairness and predictive utility through subspace decomposition, SFP helps reduce systemic disparities and promotes the development of trustworthy AI systems. This work underscores the importance of rigorous fairness evaluation and responsible deployment to advance equitable and accountable machine learning technologies.


\end{document}